\newcommand{\bs}{\boldsymbol}
\newcommand{\sgn}{\mathrm{sgn}}
\newcommand\norm[1]{\left\lVert#1\right\rVert}
\newtheorem{thm}{Theorem}
\newtheorem{lem}{Lemma}
\newtheorem{defi}{Definition}
\newcommand{\be}{\begin{equation}}
\newcommand{\ee}{\end{equation}}
\newcommand{\ba}{\begin{array}}
\newcommand{\ea}{\end{array}}
\begin{document}

%

%

\title{Understanding and Quantifying Adversarial Examples Existence in Linear Classification}


\author{
  Xupeng Shi \\
  Department of Mathematics\\
  Northeastern University\\
  Boston, USA \\
  \texttt{shi.xup@husky.neu.edu} \\
   \And
 A. Adam Ding \\
  Department of Mathematics\\
  Northeastern University\\
  Boston, USA \\
  \texttt{a.ding@northeastern.edu} \\
}

\maketitle


\begin{abstract}
State-of-art deep neural networks (DNN) are vulnerable to attacks by adversarial examples: a carefully designed small perturbation to the input, that is imperceptible to human, can mislead DNN.
To understand the root cause of adversarial examples, we quantify the probability of adversarial example existence for linear classifiers.
Previous mathematical definition of adversarial examples only involves the overall perturbation amount, and we propose a more practical relevant definition of strong adversarial examples that separately limits the perturbation along the signal direction also.
We show that linear classifiers can be made robust to strong adversarial examples attack in cases where no adversarial robust linear classifiers exist under the previous definition.
The quantitative formulas are confirmed by numerical experiments using a linear support vector machine (SVM) classifier.
The results suggest that designing general strong-adversarial-robust learning systems is feasible but only through incorporating human knowledge of the underlying classification problem.
\end{abstract}

\section{Introduction}

The deep neural networks (DNN) are widely used as the state-of-art machining learning classification systems due to its great performance gains in recent years. Meanwhile adversarial examples, first pointed out by \cite{Szegedy}, emerges as a novel peculiar security threat against such systems: a small perturbation that is unnoticeable to human eyes can cause the DNNs to misclassify.
Various adversarial algorithms have since been developed to efficiently find adversarial examples~\citep{ExplainAd,Deepfool,CW,DLResistance}.
The adversarial examples have also been demonstrated to misled DNN based classification systems in physical world applications~\citep{faceAd, advpatch, physicaladv, synadv}.
Various defense methods have also been proposed to prevent adversarial example attacks: Adversarial training \citep{Szegedy, ExplainAd}; Defensive distillation \cite{distillation}; Minmax robust training~\citep{DLResistance, DistributionalRobustness}; Input transformation \cite{FeatureSqueezing}.
However, many of the defenses are shown to be vulnerable to attacks taking such defense strategies into consideration~\citep{athalye2018obfuscated}.

Recently, \cite{AdInevitable} showed that, for two classes of data distributed with bounded probability densities on a compact region of a high dimensional space, no classifier can both have low misclassification rate and be robust to adversarial examples attack. So are we left hopeless against such threat? Theoretical analysis for understanding adversarial examples is needed to address this issue.
\cite{ExplainAd,AnalysisRobustness} pointed out that susceptibility of DNN classifiers to adversarial attacks could be related to their locally linear behaviours.
The existence of adversarial examples is not unique to DNN, traditional linear classifiers also have adversarial examples.
In this paper, we extend the understanding of adversarial examples by quantifying the probability of their existence for a simple case of linear classifiers that performs binary classification on Gaussian mixture data.

In previous literature, a data point $\bs{x}$ is mathematically defined as having an adversarial example $\bs{x}'=\bs{x}+\bs{v}$ when the perturbation amount $\norm{\bs{v}}$ is small and $\bs{x}'$  is classified differently from $\bs{x}$.
This definition does not exclude genuine signal perturbation.
For example, if a dog image $\bs{x}$ is perturbed to an image $\bs{x}'$ that is classified as a cat by both human and the machine classifier, then $\bs{x}'$ should not be an adversarial example even if  $\norm{\bs{v}}=\norm{\bs{x}'-\bs{x}}$ is small.
The proper definition needs to capture the novelty of adversarial examples attack: while a human would consider two images $\bs{x}'$ and $\bs{x}$ very similar and consider both clearly as dogs, a machine classifier misclassifies $\bs{x}'$ as a cat.
While defining genuine signal perturbation for general learning problems is difficult mathematically, the signal perturbation is clear in the binary linear classification for Gaussian mixture data. We therefore propose a new definition of strong-adversarial examples that limits the perturbation amount in the signal direction separately from the limit on overall perturbation amount.

In this paper, we derive quantitative formulas for the probabilities of adversarial and strong-adversarial examples existence in the binary linear classification problem.
Our quantitative analysis shows that an adversarial-robust linear classifier requires much higher signal-to-noise ratio (SNR) in data than a good performing classifier does.
Therefore, in many practical applications, adversarial-robust classifiers may not be available nor are such classifiers desirable.
On the contrary, useful strong-adversarial-robust linear classifiers exists at the SNR similar to that required by the existence of any useful linear classifiers, however, they require better designed training algorithms.

The paper is organized as follows. Section~\ref{sec:theo} presents the notations and definitions of (strong-)adversarial examples and derive explicit formulas for the probability of their existence.
Section~\ref{sec:exp} presents numerical experiments. The formulas are confirmed experimentally, and then are used to illustrate their implication on the vulnerability against (strong-)adversarial example attacks. 
Section~\ref{sec:re} discusses how our results relate to some works in literature and summarize their implication on general adversarial attack defenses.

\section{Adversarial Rates Analysis of Linear Binary Classifier on Gaussian Mixture Data}
\label{sec:theo}

We first introduce our definitions of adversarial and strong-adversarial examples, and then we characterize their existence through defining sets.
Using the defining sets, we derive explicit probability rates of (strong-)adversarial examples existence for linear classifiers on Gaussian mixture data.

\subsection{Definition of Adversarial and Strong-Adversarial Examples}\label{sec:def}

The classical adversarial examples are defined as follows:

\begin{defi}\label{def:adv}\footnote{We don't distinguish the targeted and untargeted adversarial examples here because for binary classification they are the same.}
Given a classifier $C$, an $\varepsilon$-adversarial example of a data vector $\bs{x}$  is another data vector $\bs{x}'$ such that $\norm{\bs{x}- \bs{x}'} \leq \varepsilon$ but $C(\bs{x})\neq C(\bs{x}')$.
\end{defi}
Without loss of generality, in this paper we focus on $\ell_2$ norm perturbations. If not specified, $\norm{\cdot}$ in the following refers to the $\ell_2$ norm. The general $\ell_p$ norm ($p\ge 1$) perturbation is studied in the Appendix~\ref{sec:appendix}, and the results will be stated in the discussion section.

For a general machine classification problem, it is reasonable to only consider adversarial examples since the signal direction is often not easily definable mathematically. Here we consider the simple binary linear classification of Gaussian mixture data where the signal direction can be clearly distinguished. For two classes labeled `$+$' and `$-$' respectively, a linear classifier is  $C(\bs{x};\bs{w},b)= \{\bs{w}\cdot\bs{x}+b >0\}$ where `$\cdot$' denotes the inner product of two vectors. Here the parameters $\bs{w}$ and $b$ are respectively the weight vector and the bias term. For the classical Gaussian mixture data problem, for each of the two classes, the $d$-dimensional data vector $\bs{x}$ comes from a multivariate Gaussian distribution $N(\bs{\mu}_i,\sigma_i^2\bs{I}_d)$, $i=$ `$+$' or `$-$'. Notice the optimal ideal classifier here is the Bayes classifier 
$C(\bs{x};\bs{\mu},\bar{\bs{\mu}})=\{\bs{\mu}\cdot(\bs{x}-\bar{\bs{\mu}})>0\}$\footnote{Here we just use the optimal Bayes classfier for balanced case since we are focusing on the balanced case in the following text.} where $\bs{\mu}=\frac{1}{2}(\bs{\mu}_+-\bs{\mu}_-),\bar{\bs{\mu}}=\frac{1}{2}(\bs{\mu}_{+}+\bs{\mu}_{-})$.

For this problem, the data distributions of the two classes only differ in their means $\bs{\mu}_+$ and $\bs{\mu}_-$.
Thus the signal direction is $\bs{\mu}_0=\bs{\mu}/\norm{\bs{\mu}}$.
Adding $2\norm{\bs{\mu}}$ amount of perturbation along the signal direction changes the `$-$' class data distribution to the `$+$' class data distribution exactly, rending all classifiers unable to defend against such a perturbation.

In previous literature, the adversarial examples definition does not limit perturbation along the signal direction, therefore we propose a new definition that limits the perturbation along the signal direction separately by an amount $\delta$, we will refer these examples as \textit{strong-adversarial examples} .
\begin{defi}\label{def:s-adv}
Given a classifier $C$, an $(\varepsilon,\delta)$-strong-adversarial example of a data vector $\bs{x}$  is another data vector $\bs{x}'$ such that $\norm{\bs{x}- \bs{x}'} \leq \varepsilon$ and $|(\bs{x}-\bs{x}')\cdot\bs{\mu}_0|\leq\delta$ but $C(\bs{x})\neq C(\bs{x}')$.
\end{defi}

To illustrate the difference between the adversarial examples and the strong-adversarial examples, we consider the following examples visualized in Figure~\ref{fig:example}. Here, Figure~\ref{fig:example}(a) shows a data vector $\bs{x}$ of dimension $d=19 \times 19 = 361$ from the `$+$' class. To visualize, each component of the data vector is mapped onto $[0,1]$ via function $\frac{1}{2}(\tanh \frac{2x}{3}+1)$ and then displayed in grey scale as a $19\times 19$ image~\citep{CW}.
\begin{figure*}[htbp]
\begin{center}
\includegraphics[width=\textwidth]{./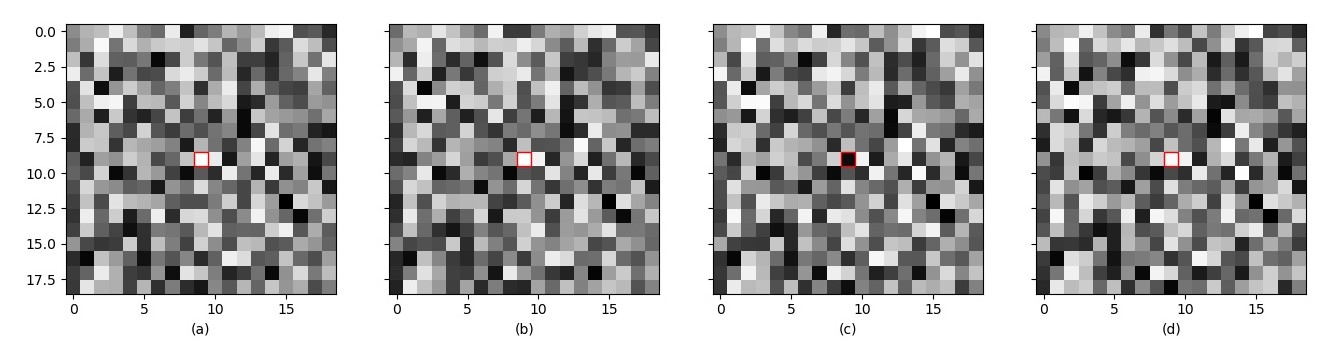}
\caption{(a) a data point $\bs{x}$ from the `$+$' class; 
(b) a randomly perturbed $\bs{x}'$; 
(c) an adversarial $\bs{x}'$ but not strong-adversarial; 
(d) a strong-adversarial $\bs{x}'$. 
All three perturbations are of the same amount $\varepsilon = 5.7$ and $\norm{\bs{\mu}}=4$.
The center grid cell within the red boundary contains the real class signal.
}\label{fig:example}
\end{center}
\end{figure*}

The two means $\bs{\mu}_+$ and $\bs{\mu}_-$ are chosen to be zero at every component of the vector except the component corresponding to center grid cell (shown with red boundary in~Figure~\ref{fig:example}). Hence the optimal Bayes classifier identifies the image as from `$+$' (or `$-$') class when the center grid cell within the red boundary appears to be white (or black). With a perturbation amount of  $\varepsilon=0.3\times 19 = 5.7$, Figure~\ref{fig:example}(b) shows a randomly perturbed $\bs{x}'$ which is hardly distinguishable from the first image $\bs{x}$ to the human eye. This confirms that, in defending against realistic threats, $\varepsilon$ of magnitude $O(\sqrt{d})$ needs to be studied. (Detailed discussion of $\varepsilon$ order is in subsection~\ref{sec:rate}.)

For a trained support vector machine (SVM) classifier, Figure~\ref{fig:example}(c) and (d) shows two adversarial examples with the same $\varepsilon= 5.7$, but only the last one in (d) is strong-adversarial for $\delta=1.2$. (Section~\ref{sec:exp} provides detailed setup of this experiment.) The adversarial attacks present a novel threat: a machine classifier misclassifies the perturbed data points that a human would not have noted the difference. We can see that our strong-adversarial example definition focus attention on this novel threat. In contrast, under the traditional definition, the adversarial examples include examples similar to Figure~\ref{fig:example}(c) that would indeed be classified by human into another class. We now quantitatively analyze the existence of adversarial and strong-adversarial examples.

\subsection{The Defining Sets}\label{sec:set}

Here we characterize the defining sets where the (strong-)adversarial examples exist. Then we quantify the probability of data falling into these defining sets in the next subsection~\ref{sec:rate}.

We denote $\Omega_\varepsilon=\{\bs{x}: \bs{x}\ \mbox{has\ an\ }\varepsilon\mbox{-adversarial\ example}\}$ and $\Omega_{\varepsilon,\delta}=\{\bs{x}: \bs{x}\ \mbox{has\ an\ }(\varepsilon,\delta)\mbox{-strong-adversarial}$ example\}. Furthermore, for a fixed perturbation $\bs{v}$, we denote the set where $\bs{v}$ changes classification as $\Omega(\bs{v})=\{\bs{x}\in\mathbb{R}^d: C(\bs{x}+\bs{v})\neq C(\bs{x})\}$.

For any data point $\bs{x}$ in $\Omega_\varepsilon$, there exists a $\bs{v}$ with $\norm{\bs{v}}\leq\varepsilon$ such that $\bs{x}+\bs{v}$ is classified differently from $\bs{x}$. In other words, the distance of $\bs{x}$ from the classifier's decision boundary is less than $\varepsilon$. For a linear classifier $C(\bs{x};\bs{w},b)= \{\bs{w}\cdot\bs{x}+b >0\}$, the normal direction of its decision boundary is $\bs{v}_0={\bs{w}}/\norm{\bs{w}}$. Thus, perturbing $\bs{x}$ by $\varepsilon$ amount along one of the two directions $\bs{v}_0$ or $-\bs{v}_0$ will cross the linear decision boundary. That is, $\Omega_\varepsilon \subseteq \Omega(\varepsilon \bs{v}_0)\cup \Omega(-\varepsilon\bs{v}_0)$. Since it is obvious from the definition that $\Omega_\varepsilon=\bigcup_{\norm{\bs{v}}\leq\varepsilon}\Omega(\bs{v}) \supseteq \Omega(\varepsilon \bs{v}_0)\cup \Omega(-\varepsilon\bs{v}_0)$, we have $\Omega_\varepsilon = \Omega(\varepsilon \bs{v}_0)\cup \Omega(-\varepsilon\bs{v}_0)$. In summary, to judge if $\bs{x} \in \Omega_\varepsilon$, we only need to check the perturbation along the normal direction $\bs{v}_0$.

In contrast, our definition of strong-adversarial examples only allows $\delta$ amount of perturbation along the signal notation $\bs{\mu}_0$, hence it is not sufficient to only check perturbations $\varepsilon\bs{v}_0$ and $-\varepsilon\bs{v}_0$ for judging if $\bs{x} \in \Omega_{\varepsilon,\delta}$.
Let $\theta$ denote the deflected angle between $\bs{\mu}_0$ and $\bs{v}_0$. (Without loss of generality, we choose the $\theta$ value such that $0 \le \theta \le \pi/2$.)
Then we can decompose $\bs{v}_0$ into two components along and orthogonal to the signal direction $\bs{\mu}_0$ respectively.
That is, $\bs{v}_0= \cos\theta \bs{\mu}_0+ \sin \theta \bs{n}_0$ where $\bs{n}={\bs{v}_0-(\bs{v}_0\cdot \bs{\mu}_0)\bs{\mu}_0}$ and $\bs{n}_0=\bs{n}/\norm{\bs{n}}$.
When $\varepsilon\cos\theta \le \delta$, the adversarial example resulting from the $\varepsilon\bs{v}_0$ perturbation is also strong-adversarial by definition.
When $\varepsilon\cos\theta > \delta$, however, $\varepsilon\bs{v}_0$ is no longer an allowable perturbation in the strong-adversarial example definition.
Then we need to check whether classification change is caused by a perturbation of $\delta$ amount along $\bs{\mu}_0$ direction and $\sqrt{\varepsilon^2-\delta^2}$ amount along $\bs{n}_0$ direction.
That is, , to judge if $\bs{x} \in \Omega_{\varepsilon,\delta}$, we need to check perturbations $\bs{u}_2=\delta \bs{\mu}_0+\sqrt{\varepsilon^2-\delta^2}\bs{n}_0$ and $-\bs{u}_2$. We summarize the defining sets characterization in the following lemma whose detailed proof is in the Appendix~\ref{sec:app}.


\begin{lem}\label{lem:set} The defining sets for $\varepsilon$-adversarial and $(\varepsilon,\delta)$-strong-adversarial examples are given by:
\begin{equation}\label{eq:set}
\Omega_\varepsilon = \Omega(\varepsilon \bs{v}_0)\cup \Omega(-\varepsilon\bs{v}_0); \quad \Omega_{\varepsilon,\delta} = \Omega(\bs{u}_2)\cup\Omega(-\bs{u}_2)
\end{equation}
where $\bs{u}_2=\beta\bs{\mu}_0+\sqrt{\varepsilon^2-\beta^2}\bs{n}_0, \beta=\min(\varepsilon\cos\theta,\delta)$.
\end{lem}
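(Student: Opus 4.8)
My plan is to reduce both identities to one structural fact about $\Omega(\bs{v})$ for a linear classifier, together with two elementary optimization problems. First I would set $g(\bs{x})=\bs{w}\cdot\bs{x}+b$, so that $C(\bs{x})=\{g(\bs{x})>0\}$ and $g(\bs{x}+\bs{v})=g(\bs{x})+t$ with $t=\bs{w}\cdot\bs{v}$; hence whether $\bs{x}\in\Omega(\bs{v})$ depends on $\bs{v}$ only through the scalar $t$. Splitting on the sign of $g(\bs{x})$ gives the precise description: if $g(\bs{x})>0$ then $\bs{x}\in\Omega(\bs{v})$ iff $t\le-g(\bs{x})$, and if $g(\bs{x})\le 0$ then $\bs{x}\in\Omega(\bs{v})$ iff $t>|g(\bs{x})|$. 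In particular $\Omega(\bs{v})$ is monotone in $t$: if $\bs{x}\in\Omega(\bs{v})$ and $\bs{w}\cdot\bs{v}'$ has the same sign as $t$ with $|\bs{w}\cdot\bs{v}'|\ge|t|$, then $\bs{x}\in\Omega(\bs{v}')$.

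From this I would extract the key reduction: for any compact symmetric set $S=-S$ of admissible perturbations, $\bigcup_{\bs{v}\in S}\Omega(\bs{v})=\Omega(\bs{v}^{+})\cup\Omega(-\bs{v}^{+})$, where $\bs{v}^{+}$ maximizes $\bs{w}\cdot\bs{v}$ over $\bs{v}\in S$ (existence by compactness, assuming $\bs{w}\neq\bs{0}$; if $\bs{w}=\bs{0}$ every $\Omega(\bs{v})$ is empty and there is nothing to prove). Indeed, since $\bs{0}\in S$ we have $\bs{w}\cdot\bs{v}^{+}\ge 0$ and, by symmetry, $-\bs{v}^{+}$ attains the minimum; then for $g(\bs{x})\le 0$ some $\bs{v}\in S$ witnesses $\bs{x}\in\Omega(\bs{v})$ iff $\max_{\bs{v}\in S}\bs{w}\cdot\bs{v}>|g(\bs{x})|$, iff $\bs{x}\in\Omega(\bs{v}^{+})$; the case $g(\bs{x})>0$ is symmetric with witness $-\bs{v}^{+}$, and one checks that $\Omega(\bs{v}^{+})$ and $\Omega(-\bs{v}^{+})$ already lie in $\{g\le 0\}$ and $\{g>0\}$ respectively, so the union is exactly $\Omega(\bs{v}^{+})\cup\Omega(-\bs{v}^{+})$.

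It then remains to identify $\bs{v}^{+}$ in the two cases. For $\Omega_\varepsilon$, $S=\{\bs{v}:\norm{\bs{v}}\le\varepsilon\}$ and Cauchy--Schwarz gives $\bs{v}^{+}=\varepsilon\bs{w}/\norm{\bs{w}}=\varepsilon\bs{v}_0$, proving the first identity in \eqref{eq:set}. For $\Omega_{\varepsilon,\delta}$, $S=\{\bs{v}:\norm{\bs{v}}\le\varepsilon,\ |\bs{v}\cdot\bs{\mu}_0|\le\delta\}$; I would decompose $\bs{v}=a\bs{\mu}_0+\bs{v}_\perp$ with $\bs{v}_\perp\perp\bs{\mu}_0$ and use $\bs{v}_0=\cos\theta\,\bs{\mu}_0+\sin\theta\,\bs{n}_0$ to write $\bs{w}\cdot\bs{v}=\norm{\bs{w}}\bigl(a\cos\theta+\sin\theta\,(\bs{n}_0\cdot\bs{v}_\perp)\bigr)$. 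Since $0\le\theta\le\pi/2$, for fixed $\norm{\bs{v}_\perp}=r$ the best choice is $\bs{v}_\perp=r\bs{n}_0$, and the objective is nondecreasing in $r$, so the maximum lies on $a^2+r^2=\varepsilon^2$. Parametrizing $a=\varepsilon\cos\phi$, $r=\varepsilon\sin\phi$ turns the objective into $\varepsilon\cos(\phi-\theta)$, maximized by taking $\phi$ as close to $\theta$ as the constraint $\varepsilon\cos\phi\le\delta$ permits: $\phi=\theta$ (giving $\bs{v}^{+}=\varepsilon\bs{v}_0$) when $\varepsilon\cos\theta\le\delta$, and $\cos\phi=\delta/\varepsilon$, i.e. $a=\delta$, $r=\sqrt{\varepsilon^2-\delta^2}$, when $\varepsilon\cos\theta>\delta$. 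In both cases $\bs{v}^{+}=\beta\bs{\mu}_0+\sqrt{\varepsilon^2-\beta^2}\,\bs{n}_0=\bs{u}_2$ with $\beta=\min(\varepsilon\cos\theta,\delta)$, which is the second identity.

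I expect the main obstacle to be bookkeeping rather than ideas, concentrated in two places. Since the convention $C(\bs{x})=\{g(\bs{x})>0\}$ is not symmetric under $g\mapsto-g$, I would verify the monotonicity statement and the reduction to $\Omega(\bs{v}^{+})\cup\Omega(-\bs{v}^{+})$ carefully, including the boundary case $g(\bs{x})=0$, rather than treating the two signs as interchangeable. And in the constrained optimization I would double-check that $\bs{u}_2$ is the maximizer in both regimes $\varepsilon\cos\theta\le\delta$ and $\varepsilon\cos\theta>\delta$, and handle the degenerate direction $\theta=0$, where $\bs{n}_0$ is undefined but harmless: there $\bs{w}=\norm{\bs{w}}\bs{\mu}_0$, so $\bs{w}\cdot\bs{u}_2=\beta\norm{\bs{w}}$ for any unit vector orthogonal to $\bs{\mu}_0$ chosen as $\bs{n}_0$, and $\bs{u}_2\in S$, so the formula still holds. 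The remaining computations are routine.
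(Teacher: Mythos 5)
Your proposal is correct and follows essentially the same route as the paper's proof: both reduce the union $\bigcup_{\bs{v}\in S}\Omega(\bs{v})$ to $\Omega(\bs{v}^{+})\cup\Omega(-\bs{v}^{+})$ for the maximizer $\bs{v}^{+}$ of $\bs{w}\cdot\bs{v}$ over the admissible set, then identify $\bs{v}^{+}$ via Cauchy--Schwarz for the ball and via the planar decomposition along $\bs{\mu}_0$ and $\bs{n}_0$ (corner versus tangent point) for the constrained set. Your packaging of the reduction as a general statement about symmetric compact $S$, and your explicit handling of the boundary case $g(\bs{x})=0$ and the degenerate angle $\theta=0$, are minor refinements of the paper's argument rather than a different method.
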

Next, we use these defining sets to quantify the probabilities of (strong-)adversarial example existence.

\subsection{Adversarial and Strong-Adversarial Rates}\label{sec:rate}

For the binary classification problem, a random data vector comes from
the Gaussian mixture distribution $p(\bs{x})=\lambda_+\varphi_{+}(\bs{x})+\lambda_-\varphi_{-}(\bs{x})$, where $\varphi_i(\bs{x})$ is the probability density function of the multivariate Gaussian $N(\bs{\mu}_i,\sigma_i^2\bs{I}_d)$ and $\lambda_i$ is the probability that the data vector belongs to the class of $i=$ `$+$' or `$-$'.
For simplicity, we focus on the balanced classes case of $\lambda_{+}=\lambda_{-}=0.5$ and also $\sigma_+=\sigma_-=\sigma$.

\paragraph{Adversarial Rate} For a random data vector $\bs{x}$ from the `$+$' class, it has an $\varepsilon$-adversarial example $\bs{x}'$ if it is classified correctly by $\bs{w}\cdot\bs{x}+b>0$ and $\bs{x} \in \Omega(-\varepsilon\bs{v}_0)$. Thus the adversarial rate from the `$+$' class is
\begin{align}
\lambda_{+} pr[\bs{w}\cdot \bs{x}+b>0,\bs{w}\cdot(\bs{x}-\varepsilon\bs{v}_0)+b<0 \ |\varphi_+(\bs{x})]  = 0.5 pr[0<\bs{w}\cdot \bs{x}+b<\varepsilon\norm{\bs{w}} \ |\varphi_+(\bs{x})] .
\end{align}
Since under the multivariate Gaussian $N(\bs{\mu}_+,\sigma^2\bs{I}_d)$ distribution $\varphi_+(\bs{x})$,  $\bs{w}\cdot\bs{x}+b$ is a univariate Gaussian random variable with mean $\bs{w}\cdot\bs{\mu}_+ +b$ and variance $\norm{\bs{w}}^2\sigma^2$, the above quantity becomes
\be\label{eq:adv+}
0.5\bigg[\Phi\bigg(\frac{\varepsilon\norm{\bs{w}}-(\bs{w}\cdot\bs{\mu}_++b)}{\norm{\bs{w}}\sigma}\bigg) - \Phi\bigg(\frac{-(\bs{w}\cdot\bs{\mu}_++b)}{\norm{\bs{w}}\sigma}\bigg)\bigg].
\ee
Here $\Phi(\cdot)$ denotes the cumulative distribution function (CDF) of the standard Gaussian distribution $N(0,1)$.
Similarly, the adversarial rate from the `$-$' class is
\begin{align}\label{eq:adv-}
\lambda_- pr[-\varepsilon\norm{\bs{w}}<\bs{w}\cdot\bs{x}+b<0|\varphi_-(\bs{x})]
=0.5\bigg[\Phi\bigg(\frac{-(\bs{w}\cdot\bs{\mu}_-+b)}{\norm{\bs{w}}\sigma}\bigg) - \Phi\bigg(\frac{-\varepsilon\norm{\bs{w}}-(\bs{w}\cdot\bs{\mu}_-+b)}{\norm{\bs{w}}\sigma}\bigg)\bigg].
\end{align}
Recall $\bs{\mu}=\frac{1}{2}(\bs{\mu}_+-\bs{\mu}_-),\bar{\bs{\mu}}=\frac{1}{2}(\bs{\mu}_{+}+\bs{\mu}_{-})$. If we denote  $b'=\bs{w}\cdot\bar{\bs{\mu}}+b$, then we can rewritten the expressions as $\bs{w}\cdot\bs{\mu}_{\pm}+b=\pm \bs{w}\cdot\bs{\mu}+b'$. Combining equations \eqref{eq:adv+} and \eqref{eq:adv-}, we have the overall adversarial rate as
\be\label{eq:adv-rate'}
\ba{rl}
p_{adv}
&= 0.5\bigg[\Phi\bigg(\frac{\varepsilon}{\sigma}-\frac{\bs{w}\cdot\bs{\mu}+b'}{\norm{\bs{w}}\sigma}\bigg) - \Phi\bigg(-\frac{\bs{w}\cdot\bs{\mu}+b'}{\norm{\bs{w}}\sigma}\bigg) +  \Phi\bigg(\frac{\bs{w}\cdot\bs{\mu}-b'}{\norm{\bs{w}}\sigma}\bigg)
-  \Phi\bigg(-\frac{\varepsilon}{\sigma} + \frac{\bs{w}\cdot\bs{\mu}-b'}{\norm{\bs{w}}\sigma}\bigg) \bigg] \\
 &= 0.5\bigg[  \Phi\bigg(\frac{\bs{w}\cdot\bs{\mu}+b'}{\norm{\bs{w}}\sigma}\bigg)
-\Phi\bigg(\frac{\bs{w}\cdot\bs{\mu}+b'}{\norm{\bs{w}}\sigma} - \frac{\varepsilon}{\sigma}\bigg)
+ \Phi\bigg(\frac{\bs{w}\cdot\bs{\mu}-b'}{\norm{\bs{w}}\sigma}\bigg)
-  \Phi\bigg(\frac{\bs{w}\cdot\bs{\mu}-b'}{\norm{\bs{w}}\sigma}-\frac{\varepsilon}{\sigma}\bigg) \bigg]
\ea
\ee

Also notice that the misclassification rates from the two classes are respectively $\lambda_+\Phi[ - (\bs{w}\cdot\bs{\mu}_++b)/(\norm{\bs{w}}\sigma)] = 0.5 \{1- \Phi[ (\bs{w}\cdot\bs{\mu}+b')/(\norm{\bs{w}}\sigma)]\}$ and $\lambda_-\{1-\Phi[ - (\bs{w}\cdot\bs{\mu}_-+b)/(\norm{\bs{w}}\sigma)]\} = 0.5 \{1-\Phi[(\bs{w}\cdot\bs{\mu}-b')/(\norm{\bs{w}}\sigma)]\}$. Thus the overall misclassification rate is
\be\label{eq:pm}
\ba{cl}
p_m & = 1 - 0.5\bigg[\Phi\bigg(\frac{\bs{w}\cdot\bs{\mu}+b'}{\norm{\bs{w}}\sigma}\bigg) + \Phi\bigg(\frac{\bs{w}\cdot\bs{\mu}-b'}{\norm{\bs{w}}\sigma}\bigg) \bigg].
\ea
\ee
We combine equations \eqref{eq:adv-rate'} and \eqref{eq:pm} into the following Theorem.
\begin{thm}\label{thm:adv-rate}
The overall adversarial rate of a linear classifier for the balanced Gaussian mixture data is
\be
\ba{cl}
p_{adv}=1-p_m-0.5\bigg[\Phi\bigg(\frac{\bs{w}\cdot\bs{\mu}+b'}{\norm{\bs{w}}\sigma}-\frac{\varepsilon}{\sigma}\bigg) 
+\Phi\bigg(\frac{\bs{w}\cdot\bs{\mu}-b'}{\norm{\bs{w}}\sigma}-\frac{\varepsilon}{\sigma}\bigg)\bigg].
\ea
\label{eq:adv-rate}
\ee
\end{thm}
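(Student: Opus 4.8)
The plan is to compute the adversarial rate one class at a time, reduce each to a one‑dimensional Gaussian tail computation via Lemma~\ref{lem:set}, and then observe that adding the misclassification rate $p_m$ telescopes the expression into the stated form. All the needed pieces are in fact already assembled in the derivation preceding the theorem; the proof is the bookkeeping that turns \eqref{eq:adv-rate'} and \eqref{eq:pm} into \eqref{eq:adv-rate}.

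First I would use Lemma~\ref{lem:set} to replace ``$\bs{x}$ has an $\varepsilon$-adversarial example'' by a one-sided constraint on $\bs{w}\cdot\bs{x}+b$. A correctly classified `$+$'-class vector is adversarially vulnerable exactly when it lies in $\Omega(-\varepsilon\bs{v}_0)$, i.e.\ when $\bs{w}\cdot(\bs{x}-\varepsilon\bs{v}_0)+b<0$; since $\bs{v}_0=\bs{w}/\norm{\bs{w}}$ gives $\bs{w}\cdot\bs{v}_0=\norm{\bs{w}}$, this is the event $0<\bs{w}\cdot\bs{x}+b<\varepsilon\norm{\bs{w}}$. Symmetrically, a correctly classified `$-$'-class vector is vulnerable precisely on $-\varepsilon\norm{\bs{w}}<\bs{w}\cdot\bs{x}+b<0$, the contribution of $\Omega(\varepsilon\bs{v}_0)$.

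Next I would invoke the standard fact that, under the $i$-th component $N(\bs{\mu}_i,\sigma^2\bs{I}_d)$, the scalar $\bs{w}\cdot\bs{x}+b$ is univariate Gaussian with mean $\bs{w}\cdot\bs{\mu}_i+b$ and variance $\norm{\bs{w}}^2\sigma^2$, so each of the two interval probabilities becomes a difference of $\Phi$-values, namely \eqref{eq:adv+} and \eqref{eq:adv-}. Writing $b'=\bs{w}\cdot\bar{\bs{\mu}}+b$ so that $\bs{w}\cdot\bs{\mu}_\pm+b=\pm\bs{w}\cdot\bs{\mu}+b'$, weighting each class by $\lambda_\pm=0.5$, summing, and using $\Phi(-t)=1-\Phi(t)$ to line up the arguments produces \eqref{eq:adv-rate'}. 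Computing $p_m$ the same way — a `$+$'-vector is misclassified on $\{\bs{w}\cdot\bs{x}+b<0\}$, with probability $0.5\big(1-\Phi(\tfrac{\bs{w}\cdot\bs{\mu}+b'}{\norm{\bs{w}}\sigma})\big)$, and similarly for the `$-$' class — gives \eqref{eq:pm}.

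Finally I would add \eqref{eq:adv-rate'} and \eqref{eq:pm}: the $+0.5\,\Phi\big(\tfrac{\bs{w}\cdot\bs{\mu}+b'}{\norm{\bs{w}}\sigma}\big)$ and $+0.5\,\Phi\big(\tfrac{\bs{w}\cdot\bs{\mu}-b'}{\norm{\bs{w}}\sigma}\big)$ terms in $p_{adv}$ cancel against the matching terms in $p_m$, leaving $p_{adv}+p_m=1-0.5\big[\Phi\big(\tfrac{\bs{w}\cdot\bs{\mu}+b'}{\norm{\bs{w}}\sigma}-\tfrac{\varepsilon}{\sigma}\big)+\Phi\big(\tfrac{\bs{w}\cdot\bs{\mu}-b'}{\norm{\bs{w}}\sigma}-\tfrac{\varepsilon}{\sigma}\big)\big]$, which rearranges to \eqref{eq:adv-rate}. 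There is no genuine obstacle here: the only care needed is consistent sign conventions when translating ``$\bs{x}$ is pushed across the decision boundary'' into a one-sided bound on $\bs{w}\cdot\bs{x}+b$ for each class, together with the repeated use of $\Phi(-t)=1-\Phi(t)$. A convenient sanity check is that the formula gives $p_{adv}\to 0$ as $\varepsilon\to 0$ and $p_{adv}\to 1-p_m$ as $\varepsilon\to\infty$.
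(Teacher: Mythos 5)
Your proposal is correct and follows essentially the same route as the paper: reduce each class's contribution to an interval probability for the univariate Gaussian $\bs{w}\cdot\bs{x}+b$ via the defining sets, substitute $b'=\bs{w}\cdot\bar{\bs{\mu}}+b$, and combine with $p_m$ so the leading $\Phi$ terms cancel. The limiting sanity checks ($\varepsilon\to 0$ and $\varepsilon\to\infty$) are a nice addition but the argument is otherwise identical to the derivation of equations \eqref{eq:adv-rate'} and \eqref{eq:pm} in the text.
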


To be robust against adversarial attacks, a linear classifier needs a low adversarial rate.
For the classifier to be useful, it also needs a low misclassification rate. Hence we should look at the sum of misclassification rate and adversarial rate, which we call the {\it adversarial-error} rate:
\be
\ba{cl}
p_{err} = p_{adv} + p_m &= 1-0.5\bigg[\Phi\bigg(\frac{\bs{w}\cdot\bs{\mu}+b'}{\norm{\bs{w}}\sigma}-\frac{\varepsilon}{\sigma}\bigg) 
 +\Phi\bigg(\frac{\bs{w}\cdot\bs{\mu}-b'}{\norm{\bs{w}}\sigma}-\frac{\varepsilon}{\sigma}\bigg)\bigg]
\ea
\label{eq:adv-err}
\ee
Comparing equation \eqref{eq:adv-err} with \eqref{eq:pm}, we can see why adversarial-robustness is hard to achieve.

First, the misclassification rate $p_m$ in \eqref{eq:pm} is minimized by the Bayes classifier with $b'=0$ and 
$\bs{w}\cdot\bs{\mu}=\norm{\bs{w}}\norm{\bs{\mu}}$. Hence the best $p_m$ value is $1-\Phi(\norm{\bs{\mu}}/\sigma)$.
There exists useful classifiers when $\norm{\bs{\mu}}/\sigma$ is big enough to make $1-\Phi(\norm{\bs{\mu}}/\sigma)$ small. This is achieved for $\norm{\bs{\mu}}/\sigma =O(1)$. For example, when $\norm{\bs{\mu}}/\sigma =3$, the misclassification rate of the Bayes classifier is around $0.1\%$.

However, to achieve a low adversarial-error rate in \eqref{eq:adv-err}, the required SNR $\norm{\bs{\mu}}/\sigma$ can be much bigger. When $\bs{w}\cdot\bs{\mu} > \varepsilon\norm{\bs{w}}$, a lower bound for the adversarial-error rate is
\be\label{eq:adv-err-bound}
p_{err} \ge 1 - \Phi\bigg(\frac{\bs{w}\cdot\bs{\mu}}{\norm{\bs{w}}\sigma}-\frac{\varepsilon}{\sigma}\bigg) \ge 1 - \Phi\bigg(\frac{\norm{\bs{\mu}}}{\sigma}-\frac{\varepsilon}{\sigma}\bigg).
\ee
Therefore, the existence of a useful adversarial-robust linear classifier requires $\norm{\bs{\mu}}/\sigma -{\varepsilon}/{\sigma}=O(1)$ instead. Notice that, for this Gaussian mixture data setup, the noise  in each class follows the $N(\bs{0},\sigma^2\bs{I}_d)$ distribution with an expected square of $\ell_2$ norm of $d \sigma^2$.
Therefore, for a positive constant value $\eta_a<1$, the perturbation amount of $\varepsilon=\eta_a \sqrt{d} \sigma$ is smaller than the average noise in data and generally is hard to detect.
Hence, for the typical high-dimensional data applications, an adversarial-robust linear classifier needs to protect against perturbation amount of $\varepsilon= O(\sqrt{d})$ which implies that $\norm{\bs{\mu}}/\sigma = O(\sqrt{d})$ is needed from equation \eqref{eq:adv-err-bound}. Next, we show that this high SNR requirement is not needed for a strong-adversarial-robust linear classifier.

\paragraph{Strong-Adversarial Rate}
The derivation of the strong-adversarial rate is very similar to that of the adversarial rate.
From equation~\eqref{eq:set}, the difference between the adversarial defining set and the strong-adversarial defining set is only that $\varepsilon \bs{v}_0$ is replaced by $\bs{u}_2=\beta\bs{\mu}_0+\sqrt{\varepsilon^2-\beta^2}\bs{n}_0$.
Hence the strong-adversarial rate from the '$+$' class is
$$
\ 0.5 pr[0<\bs{w}\cdot \bs{x}+b< \bs{w} \cdot \bs{u}_2 |\varphi_+(\bs{x})] .
$$
Since $\bs{w}\cdot\bs{\mu}_0=\norm{\bs{w}}\cos\theta$ and $\bs{w}\cdot\bs{n}_0=\norm{\bs{w}}\sin\theta$, we have $\bs{w} \cdot \bs{u}_2 = (\beta\cos\theta + \sqrt{\varepsilon^2-\beta^2} \sin\theta ) \norm{\bs{w}}$ where $\beta=\min(\varepsilon\cos\theta,\delta)$. We denote
\be\label{eq:fun_g}
g(\varepsilon, \delta, \theta) = \beta\cos\theta + \sqrt{\varepsilon^2-\beta^2} \sin\theta.
\ee
Thus replacing $\varepsilon\norm{\bs{w}}$ by $g(\varepsilon, \delta, \theta) \norm{\bs{w}}$ in equations from \eqref{eq:adv+} to \eqref{eq:adv-err}, we have the following Theorem.
\begin{thm}\label{thm:s-adv-rate}
The overall strong adversarial rate and {\it strong-adversarial-error} rate of a linear classifier are
\begin{align}
p_{s-adv} &=1-p_m-0.5\bigg[\Phi\bigg(\frac{\bs{w}\cdot\bs{\mu}+b'}{\norm{\bs{w}}\sigma}-\frac{g(\varepsilon, \delta, \theta)}{\sigma} \bigg) 
+\Phi\bigg(\frac{\bs{w}\cdot\bs{\mu}-b'}{\norm{\bs{w}}\sigma}-\frac{g(\varepsilon, \delta, \theta)}{\sigma}\bigg)\bigg]\label{eq:s-adv-rate}\\
p_{s-err} &= p_{s-adv} + p_m 
 = 1-0.5\bigg[\Phi\bigg(\frac{\bs{w}\cdot\bs{\mu}+b'}{\norm{\bs{w}}\sigma}-\frac{g(\varepsilon, \delta, \theta)}{\sigma} \bigg) 
 +\Phi\bigg(\frac{\bs{w}\cdot\bs{\mu}-b'}{\norm{\bs{w}}\sigma}-\frac{g(\varepsilon, \delta, \theta)}{\sigma}\bigg)\bigg]\label{eq:s-err}
\end{align}
\end{thm}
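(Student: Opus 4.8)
The plan is to piggy-back on the derivation of Theorem~\ref{thm:adv-rate}, exploiting the fact that Lemma~\ref{lem:set} isolates exactly one change between the adversarial and strong-adversarial settings: in the defining set, the pair $\{\varepsilon\bs{v}_0,-\varepsilon\bs{v}_0\}$ is replaced by $\{\bs{u}_2,-\bs{u}_2\}$ with $\bs{u}_2=\beta\bs{\mu}_0+\sqrt{\varepsilon^2-\beta^2}\,\bs{n}_0$ and $\beta=\min(\varepsilon\cos\theta,\delta)$. So I would first observe that, exactly as in the adversarial case, a data point $\bs{x}$ from the `$+$' class contributes to the strong-adversarial rate precisely when it is correctly classified and lies in $\Omega(-\bs{u}_2)$; writing this out gives the contribution $0.5\,pr[\,0<\bs{w}\cdot\bs{x}+b<\bs{w}\cdot\bs{u}_2 \,|\, \varphi_+(\bs{x})\,]$, and symmetrically $0.5\,pr[\,-\bs{w}\cdot\bs{u}_2<\bs{w}\cdot\bs{x}+b<0 \,|\, \varphi_-(\bs{x})\,]$ from the `$-$' class. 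Here one uses $\bs{w}\cdot(-\bs{u}_2)=-\bs{w}\cdot\bs{u}_2$ together with $\bs{w}\cdot\bs{u}_2\ge 0$, so the event intervals have the same orientation as in Theorem~\ref{thm:adv-rate}; the nonnegativity holds because $\beta\ge 0$, $\sqrt{\varepsilon^2-\beta^2}\ge 0$, and the convention $0\le\theta\le\pi/2$ makes $\cos\theta,\sin\theta\ge 0$.

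The second step is the only genuinely new computation: expressing $\bs{w}\cdot\bs{u}_2$ in closed form. Since $\bs{w}=\norm{\bs{w}}\bs{v}_0$ and $\bs{v}_0=\cos\theta\,\bs{\mu}_0+\sin\theta\,\bs{n}_0$ with $\bs{\mu}_0,\bs{n}_0$ orthonormal, we get $\bs{w}\cdot\bs{\mu}_0=\norm{\bs{w}}\cos\theta$ and $\bs{w}\cdot\bs{n}_0=\norm{\bs{w}}\sin\theta$, hence $\bs{w}\cdot\bs{u}_2=(\beta\cos\theta+\sqrt{\varepsilon^2-\beta^2}\,\sin\theta)\norm{\bs{w}}=g(\varepsilon,\delta,\theta)\norm{\bs{w}}$ by the definition \eqref{eq:fun_g} of $g$. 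Thus the only effect of passing from the adversarial defining set to the strong-adversarial one is that the threshold $\varepsilon\norm{\bs{w}}$ is replaced throughout by $g(\varepsilon,\delta,\theta)\norm{\bs{w}}$.

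Finally I would note that $\bs{w}\cdot\bs{x}+b$ remains, under each $\varphi_\pm$, the univariate Gaussian with mean $\bs{w}\cdot\bs{\mu}_\pm+b$ and variance $\norm{\bs{w}}^2\sigma^2$ already used between \eqref{eq:adv+} and \eqref{eq:adv-}: the perturbation changes only the upper/lower limit of the relevant event, not the distribution of $\bs{x}$. Therefore every manipulation from \eqref{eq:adv+} through \eqref{eq:adv-err} carries over verbatim with $\varepsilon\norm{\bs{w}}$ replaced by $g(\varepsilon,\delta,\theta)\norm{\bs{w}}$, equivalently $\varepsilon/\sigma$ replaced by $g(\varepsilon,\delta,\theta)/\sigma$. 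Since the misclassification rate $p_m$ in \eqref{eq:pm} involves neither $\varepsilon$ nor $\bs{u}_2$, combining the substituted expressions with $p_m$ exactly as in Theorem~\ref{thm:adv-rate} yields \eqref{eq:s-adv-rate}, and adding $p_m$ gives \eqref{eq:s-err}.

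I do not anticipate a serious obstacle: the analytic content was already carried out in Lemma~\ref{lem:set} and Theorem~\ref{thm:adv-rate}, and what remains is the bookkeeping of the two paragraphs above. The one point that deserves care is confirming $\bs{w}\cdot\bs{u}_2=g(\varepsilon,\delta,\theta)\norm{\bs{w}}\ge 0$ and the matching sign and orientation of the event intervals for the two classes, so that the replacement $\varepsilon\mapsto g(\varepsilon,\delta,\theta)$ is literally valid in \eqref{eq:adv+}--\eqref{eq:adv-err} rather than merely formal.
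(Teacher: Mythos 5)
Your proposal is correct and follows essentially the same route as the paper: invoke Lemma~\ref{lem:set} to replace $\pm\varepsilon\bs{v}_0$ by $\pm\bs{u}_2$ in the defining set, compute $\bs{w}\cdot\bs{u}_2=g(\varepsilon,\delta,\theta)\norm{\bs{w}}$ from the orthonormal decomposition of $\bs{v}_0$, and substitute $\varepsilon\norm{\bs{w}}\mapsto g(\varepsilon,\delta,\theta)\norm{\bs{w}}$ throughout the derivation of Theorem~\ref{thm:adv-rate}. Your added check that $g(\varepsilon,\delta,\theta)\ge 0$ (so the event intervals keep their orientation) is a small point the paper leaves implicit, but it does not change the argument.
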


Compared to the analysis above, the existence of a useful strong-adversarial-robust linear classifier requires $\norm{\bs{\mu}}/\sigma -g(\varepsilon, \delta, \theta)/{\sigma}=O(1)$ instead.
Besides the overall perturbation amount $\varepsilon$, the function $g(\varepsilon, \delta, \theta)$ in equation~\eqref{eq:fun_g} is also affected by two other factors: the signal direction perturbation amount $\delta$ and the angle $\theta$ between the classifier and the ideal Bayes classifier. What is the practical relevant amount $\delta$ we should study? Let $\delta=\eta_s \mu = \eta_s \norm{\bs{\mu}}$. When $\eta_s>1$, a $\delta$ amount perturbation along the signal direction to all '$+$' class data points will make more than half of them be classified as '$-$' by the Bayes classifier (also to human eye, e.g., Figure~\ref{fig:example}(c)). Therefore, when studying real strong-adversarial perturbations (imperceptible to human  but confuses machine) mathematically, we need to focus on $\eta_s<1$. That is, $ \delta = O(1)$. Compared to the overall perturbation amount $\varepsilon= O(\sqrt{d})$ discussed earlier, we see that $\delta\ll\varepsilon$ for typical high-dimensional data applications. When $\delta\ll\varepsilon$, $g(\varepsilon, \delta, \theta) \approx \delta \cos\theta + \varepsilon \sin \theta$. Hence if the linear classifier is well-trained to have small $\theta$ and small bias $b'$ (i.e., very close to the Bayes classifier), then its strong-adversarial-error rate is approximately $1 - \Phi[(1-\eta_s){\norm{\bs{\mu}}}/{\sigma}]$,
which can be made small when SNR $\norm{\bs{\mu}}/\sigma$ is of order $O(1)$. That is, with good training, we can find a useful strong-adversarial-robust linear classifier when $\norm{\bs{\mu}}/\sigma=O(1)$. In contrast, no training can make the linear classifier to be useful and adversarial-robust unless the SNR $\norm{\bs{\mu}}/\sigma$ is much bigger, at the order of $O(\sqrt{d})$.

The conclusion for the analysis using $\ell_p$ norm (see Appendix~\ref{sec:appendix} for details) is similar.
There exists a useful strong-adversarial-robust linear classifier for constant order SNR $\norm{\bs{\mu}}/\sigma=O(1)$, but a useful $\ell_p$-adversarial-robust linear classifier only exists when SNR is much bigger, at the order of $O(d^{min(1/p,1/2)})$.

\section{Numerical Studies and Analysis of Adversarial Examples}
\label{sec:exp}


\subsection{(Strong-)Adversarial Rates for the Linear SVM}\label{sec:num}

\paragraph{Settings}
We first conduct numerical experiments of a support vector machine (SVM) classifier on the Gaussian mixture data.
We randomly generate 5000 data points from the balanced mixture distribution $0.5N(\bs{\mu}_+,\sigma^2\bs{I}_d)+0.5N(\bs{\mu}_-,\sigma^2\bs{I}_d)$, and randomly splits them into $4000$ train data and $1000$ test data.
We set $\bs{\mu}_+=-\bs{\mu}_-=[\mu,0,\cdots,0]$, $\sigma=1$ and $d=19\times 19$.
A linear SVM is trained on the training data using the python {\it scikit-learn} package and its default setting.
Then for each test data vector, we check if it has any adversarial and strong-adversarial example, for $\varepsilon=\eta_a \sqrt{d} \sigma=19 \eta_a$ and $\delta=\eta_s \mu$.
Figure~\ref{fig:example} earlier visualizes one such test data vector and its adversarial and strong-adversarial examples for $\eta_a=\eta_s=0.3$.
We conduct this experiment for various values of $\eta=\eta_a=\eta_s$ and $\mu$, and for each parameter combination, the simulation is repeated  $1000$ times.
Figure~\ref{fig:figure1} plots three empirical rates (misclassification, adversarial-error and strong-adversarial-error), each averaged over the $1000$ simulations, against $\mu$ values, together with corresponding quantitative formulas from equations \eqref{eq:pm}, \eqref{eq:adv-err} and \eqref{eq:s-err}.
Figure~\ref{fig:figure1}(a)-(c) shows the results for three perturbation levels of $\eta=0.05, 0.1, 0.3$, with the empirical quantities shown with symbols and the quantitative formulas shown in curves.
The plots show very good agreement between the formulas with actual empirical proportions.
\begin{figure*}[htbp]
\begin{center}
\includegraphics[height=4cm,width=0.9\textwidth]{./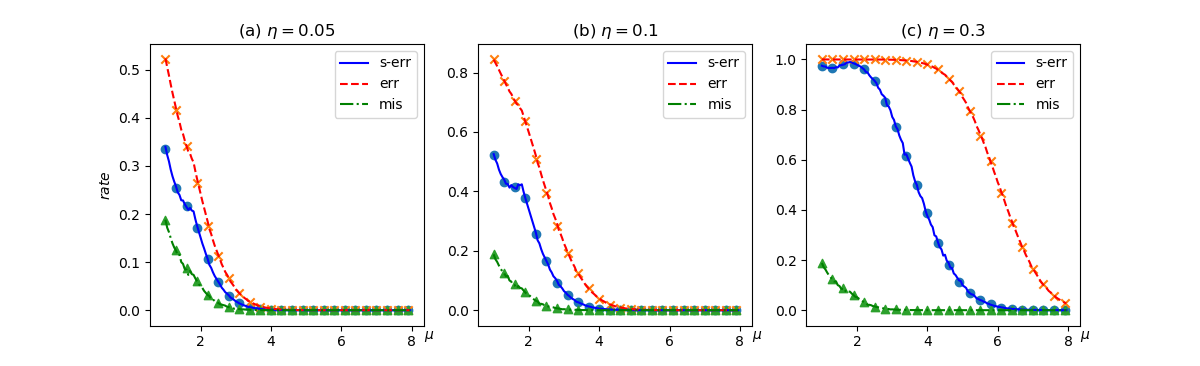}
\caption{Empirical probabilities and their theoretical values calculated from equations \eqref{eq:pm}, \eqref{eq:adv-err} and \eqref{eq:s-err}, plotted versus $\mu$. (a) $\eta=0.05$, (b) $\eta=0.1$, (c) $\eta=0.3$} 
\label{fig:figure1}
\end{center}
\end{figure*}

In our simulation, $\mu=\norm{\bs{\mu}}/1=\norm{\bs{\mu}}/\sigma$ is the SNR. Figure~\ref{fig:figure1} shows that SVM have pretty good performance in terms of misclassification rate once the SNR exceeds $2$. However, it is not robust to (strong-)adversarial attacks when $\mu=2$, and will only become robust for much larger SNR. The part of curves for $\mu<2$ have some fluctuations due to the fact that the bias term $b$ varies a lot when SNR is small. When $\mu\geq 2$, the SVM has $b\approx 0$, and we can approximate the (strong-)adversarial-error rate by dropping the bias term in \eqref{eq:adv-err} and \eqref{eq:s-err} and replace $\theta$ with its asymptotic limit as given by solving $(\theta, t)$ from the equations~\citep{Huang}:
\begin{align}
\sin^2\theta =\frac{N}{d}\int_{-\infty}^t(t-z)^2\varphi(z)\mathrm{d}z,\quad\quad
\cos\theta =\frac{N}{d}\cdot\frac{\mu}{\sigma}\int_{-\infty}^t(t-z)\varphi(z)\mathrm{d}z  \label{eq:cos}
\end{align}
where  $\varphi(z)$ is the density function of standard normal distribution. The rates plotted with these approximate formulas overlap the curves on Figure~\ref{fig:figure1} very well for the part $\mu\geq 2$. We use these formulas to study the robustness of SVM against (strong-)adversarial examples.

\begin{figure*}[htbp]
\begin{center}
\includegraphics[height=4cm,width=0.9\textwidth]{./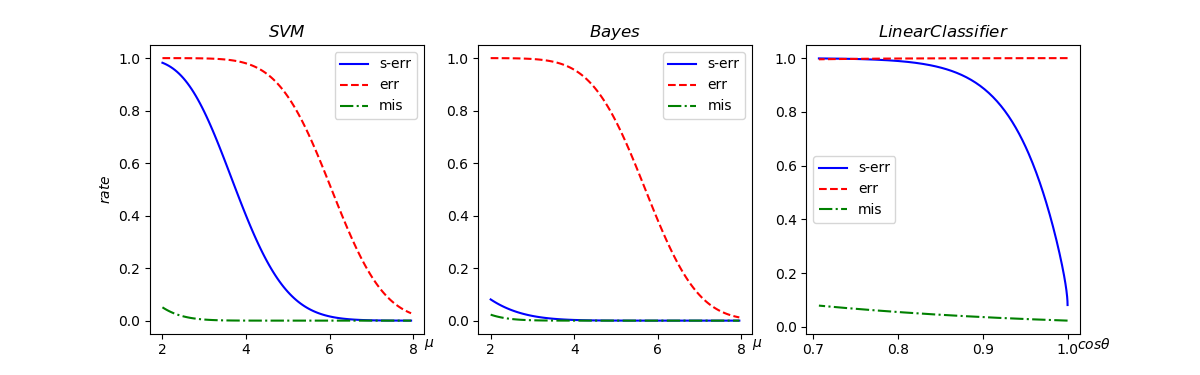}
\caption{When $\eta=0.3$, the three error rates (a) of SVM versus SNR $\mu$; (b) of Bayes classifier versus SNR $\mu$; (c) of an unbiased linear classifier versus $\cos\theta$ when $\mu=2$.} \label{fig:compare}
\end{center}
\end{figure*}

Figure~\ref{fig:compare}(a) plots the three error rates formulas of SVM when $\eta=0.3$. Figure~\ref{fig:compare}(b) plots the same rates for the Bayes classifier. These two classifiers are similar in misclassification rates and  adversarial-error rates, but are very different in strong-adversarial-error rates. For a linear classier with small bias $b' \approx 0$,  equations \eqref{eq:pm}, \eqref{eq:adv-err} and \eqref{eq:s-err} become:
\begin{align}\label{eq:rates_approx}
p_m \approx 1 - \Phi \bigg(\frac{\norm{\bs{\mu}}}{\sigma} \cos \theta\bigg),\quad p_{err} \approx 1-\Phi \bigg[\bigg(\frac{\norm{\bs{\mu}}}{\sigma}-\frac{\varepsilon}{\sigma}\bigg)\cos \theta\bigg],\quad
p_{s-err} \approx 1-\Phi\bigg[\bigg(\frac{\norm{\bs{\mu}}}{\sigma}-\frac{\delta}{\sigma}\bigg)\cos \theta - \frac{\varepsilon}{\sigma}\sin \theta\bigg]
\end{align}
Setting $\theta=0$, we get the theoretical optimal rates achieved by the ideal Bayes classifier:
\begin{align}\label{eq:rates_Bayes}
p_m^{id}=1 - \Phi \bigg(\frac{\norm{\bs{\mu}}}{\sigma}\bigg), \qquad p_{err}^{id}=1-\Phi \bigg(\frac{\norm{\bs{\mu}}}{\sigma}-\frac{\varepsilon}{\sigma}\bigg), \qquad
p_{s-err}^{id}=1-\Phi\bigg(\frac{\norm{\bs{\mu}}}{\sigma}-\frac{\delta}{\sigma}\bigg).
\end{align}
Comparing equations~\eqref{eq:rates_approx} and \eqref{eq:rates_Bayes}, between the Bayes classifier and a linear classifier with small bias, both the misclassification rate and adversarial-error rate differ by a factor of $\cos \theta$ inside the $\Phi(\cdot)$ function. However, comparing their strong-adversarial-error rates, besides the multiplicative factor $\cos \theta$, there is also an extra bias term $- \frac{\varepsilon}{\sigma}\sin \theta$ inside the $\Phi(\cdot)$ function. Since $\frac{\varepsilon}{\sigma}$ is of order $O(\sqrt{d})$, $\theta = o(1/\sqrt{d})$ is needed for the linear classifier to approach the optimal strong-adversarial-error rate. In contrast, for the misclassification rate and adversarial-error rate, only $\theta = o(1)$ is needed to approach the optimal rates.  Figure~\ref{fig:compare}(c) plots these three rates versus $\cos \theta$ when $\eta=0.3$ and $\mu=2$. We can see that misclassification rate is low for a wide range of $\cos \theta$ values while the strong-adversarial-error rate only becomes low when $\cos \theta$ is very close to one.

\subsection{Defending Against Strong-Adversarial Example Attacks}\label{sec:defense}
%
We have just seen that training a strong-adversarial-robust classifier needs stricter training requirements than those for a classifier with low misclassification rate: $\theta = o(1/\sqrt{d})$ versus $\theta = o(1)$. This is doable by incorporating some extra knowledge about the classification setting into the training. As an illustration, we show the results of using a naive method to find a sparse SVM in this case: for the SVM trained using standard method, takes ten non-zero components of $\bs{w}$ with largest absolute coefficients and set rest of components zero. The left panel of Figure~\ref{fig:sparse} plots the strong-adversarial-error rates of this sparse SVM versus original SVM. We can see that the sparse SVM achieves a low strong-adversarial-error rate very close to the optimal rate of the ideal Bayes classifier. However, the same way of finding a sparse SVM does not produce strong-adversarial-robust classifier, shown in the right panel of Figure~\ref{fig:sparse}, when the data are generated with  $\bs{\mu}_+=(\mu,\mu,...,\mu)/\sqrt{d}$ instead of $\bs{\mu}_+=(\mu,0,...,0)$. The data distributions in these two cases are equivalent with a change of coordinate systems. The sparse SVM fails in the second case since the extra knowledge incorporated into training is incorrect (sparseness only happens in the first coordinate system but not in the second coordinate system).

\begin{figure*}[htbp]
\begin{center}
\includegraphics[height=4cm,width=0.8\textwidth]{./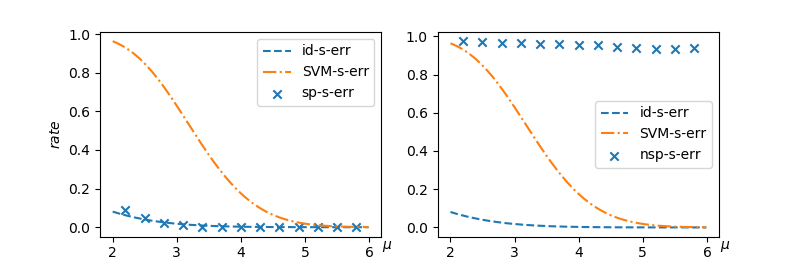}
\caption{The strong-adversarial-error rates of standard SVM ($SVM-s-err$), the sparse SVM ($sp-s-err$) and the ideal Bayes classifier ($id-s-err$). Left: $\bs{\mu}_+=(\mu,0,...,0)$; Right: $\bs{\mu}_+=(\mu,\mu,...,\mu)/\sqrt{d}$. $\eta=0.3$. }\label{fig:sparse}
\end{center}
\end{figure*}

The above exercise shows that, even when adversarial examples are unavoidable, strong-adversarial-robust linear classifiers can be found with extra structural information on the underlying problem. Notice that the sparse SVM above provides good defense by using only the knowledge of a sparse representation existence (under the coordination system) but not what the sparse representation is, with the later part learned from data by training. More generally, statisticians have noticed that SVMs are suspect to the phenomenon of data-piling: there are more data points close to the decision boundary than Gaussian mixture distribution implies. The distance-weighted discriminant~\citep{DistanceWeighted} can be used to alleviate this data-piling phenomenon, and may be used to protect against strong-adversarial examples.

\begin{figure*}[htbp]
\begin{center}
\includegraphics[height=4cm,width=0.8\textwidth]{./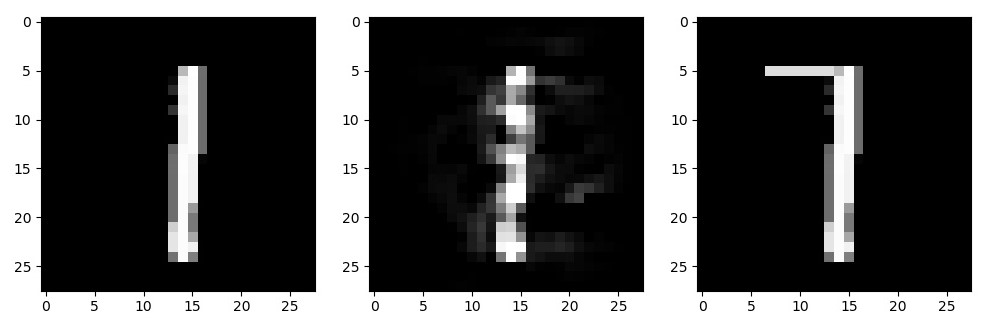}
\caption{MNIST images of '$1$': (a) the original image, (b) an adversarial example with $\varepsilon=2.34$, (c) a hand-made example with $\varepsilon=2.28$}\label{fig:MNIST}
\end{center}
\end{figure*}

For more general classification problems, the signal direction is harder to define. But the concept of adversarial versus strong-adversarial examples still applies. Figure~\ref{fig:MNIST} shows an image of '$1$' from the MNIST data set, and two images with added perturbations. 
(b) shows an adversarial example obtained by \cite{CW} algorithm with $\varepsilon=2.34$,  that is misclassified by a DNN. 
(c) shows an image we made with a similar perturbation amount $\varepsilon=2.29$.
If a classifier is adversarial-robust at level of $\varepsilon=2.34$, then it needs to classify both images (b) and (c) as  '$1$'.
However, classifying image (c) as  '$1$' clearly contradicts what a human would do, rendering the usefulness of the classifier for practical applications in doubt. Generally, we should pursue a strong-adversarial-robust classifier, not an adversarial-robust one.

\section{Discussions and Conclusions}
\label{sec:re}

In this paper, we provide clear definitions of adversarial and strong adversarial examples in the linear classification setting. Quantitative analysis shows that adversarial examples are hard to avoid but also should not be of concern in practice. Rather, we should focus on finding strong-adversarial-robust classifiers. We now consider the implications of these results on studying adversarial examples for general classifiers, and their relationship to some recent works in literature.

Recently, \citet{AdInevitable} shows that no classifier can achieve low misclassification rate and also be adversarial-robust for data distributions with bounded density on a compact region in a high-dimensional space. Our analysis does not match exactly with their impossibility statement because we are studying the Gaussian mixture case, which has positive density on the whole space. However, in spirit our results have similar implications: for the usual SNR $O(1)$ that allows low misclassification rate, generally it is impossible to be also adversarial-robust (for which a much bigger SNR $O(\sqrt{d})$ is required).

Our results, however, do show that there can be adversarial-robust classifiers under the traditional definition when the SNR is very big. 
\citet{AdvMoreData} has also shown that, for Gaussian mixture classification problem and a particular training method, the adversarial-robustness is achievable but requires more training data than simply achieving the low misclassification rate only.
Our formula indicates that useful adversarial-robust classifier do exist at the SNR level they assumed. Our study is more focused on the fundamental issue of when useful adversarial-robust classifiers exist, not which training method and what data complexity will find such a classifier. However, our formulas do indicate that an adversarial-robust classifier has to satisfy a stricter requirement than a good performing classifier. Thus either a better training method or a higher data complexity is needed for finding a useful adversarial-robust classifier, agreeing with the general theme of \citet{AdvMoreData}.

Our results on the existence of adversarial examples do not change qualitatively when using other  $\ell_p$ norm to measure the perturbation: under traditional definition, useful adversarial-robust classifier exists only when the data distribution has a very big SNR of $O(d^{min(1/p,1/2)})$ as shown in the Appendix~\ref{sec:appendix}.
For many applications where good classifiers exists (SNR of only $O(1)$ ensures this), we can not pursue adversarial-robust classifier under the traditional adversarial example definition~\ref{def:adv}.
The current defense strategies based on such adversarial example definition likely will still be suspect to more sophisticated adversarial attacks. For certifiable adversarial-robust classifiers~\citep{DLResistance, DistributionalRobustness}, the robustness is achieved only for the perturbation amount $\varepsilon$ high enough so that they differ from human in classifying images like those in Figure~\ref{fig:example}(c) and Figure~\ref{fig:MNIST}(c).
Thus a paradigm change is needed: we should train a classifier to be strong-adversarial-robust rather than adversarial-robust.

While the signal direction is obvious in the linear classification, the signal direction and the definition of strong-adversarial examples in general classification warrants further study.
The signal direction in the linear classification here is the direction where the likelihood ratio of the two classes changes most rapidly.
One reasonable extension is to define the signal direction at any data vector $\bs{x}$  as the gradient direction of the likelihood ratio at $\bs{x}$.
Then similar to definition~\ref{def:s-adv}, the strong-adversarial example for general classifier also restrict the change along this signal direction to the amount $\delta$.
The strong-adversarial-robust classifiers therefore are likely to be very close to the Bayes classifier.
Some recent works have attempted training DNN to be close to the Bayes classifier: \citet{advnn} uses a nearest neighbors method, and \citet{AdvRobustMNIST} applies the generative model techniques. In particular, \citet{AdvRobustMNIST} applied their method on MNIST dataset, and when applying a specifically designed attack on such a trained DNN, the adversarial examples found are semantically meaningful for humans. That is, these adversarial examples are adversarial in traditional definition but likely not strong-adversarial.
The new strong-adversarial examples framework can allow theoretical quantification of the robustness for these training methods.
The analysis of strong-adversarial-robustness for general classifiers such as DNN can provide a new research direction on how to defend against realistic adversarial attacks.

\citestyle{IEEEtran}
\bibliographystyle{IEEEtranN}
\bibliography{AdRef}

\begin{thebibliography}{21}
\providecommand{\natexlab}[1]{#1}
\providecommand{\url}[1]{#1}
\csname url@samestyle\endcsname
\providecommand{\newblock}{\relax}
\providecommand{\bibinfo}[2]{#2}
\providecommand{\BIBentrySTDinterwordspacing}{\spaceskip=0pt\relax}
\providecommand{\BIBentryALTinterwordstretchfactor}{4}
\providecommand{\BIBentryALTinterwordspacing}{\spaceskip=\fontdimen2\font plus
\BIBentryALTinterwordstretchfactor\fontdimen3\font minus
  \fontdimen4\font\relax}
\providecommand{\BIBforeignlanguage}[2]{{%
\expandafter\ifx\csname l@#1\endcsname\relax
\typeout{** WARNING: IEEEtranN.bst: No hyphenation pattern has been}%
\typeout{** loaded for the language `#1'. Using the pattern for}%
\typeout{** the default language instead.}%
\else
\language=\csname l@#1\endcsname
\fi
#2}}
\providecommand{\BIBdecl}{\relax}
\BIBdecl

\bibitem[Szegedy et~al.(2014)Szegedy, Zaremba, Sutskever, Bruna, Erhan,
  Goodfellow, and Fergus]{Szegedy}
C.~Szegedy, W.~Zaremba, I.~Sutskever, J.~Bruna, D.~Erhan, I.~J. Goodfellow, and
  R.~Fergus, ``Intriguing properties of neural networks,'' \emph{CoRR}, vol.
  abs/1312.6199, 2014.

\bibitem[Goodfellow et~al.(2015)Goodfellow, Shlens, and Szegedy]{ExplainAd}
\BIBentryALTinterwordspacing
I.~Goodfellow, J.~Shlens, and C.~Szegedy, ``Explaining and harnessing
  adversarial examples,'' in \emph{International Conference on Learning
  Representations}, 2015. [Online]. Available:
  \url{http://arxiv.org/abs/1412.6572}
\BIBentrySTDinterwordspacing

\bibitem[Moosavi-Dezfooli et~al.(2016)Moosavi-Dezfooli, Fawzi, and
  Frossard]{Deepfool}
S.-M. Moosavi-Dezfooli, A.~Fawzi, and P.~Frossard, ``Deepfool: A simple and
  accurate method to fool deep neural networks,'' in \emph{The IEEE Conference
  on Computer Vision and Pattern Recognition (CVPR)}, June 2016.

\bibitem[{Carlini} and {Wagner}(2017)]{CW}
N.~{Carlini} and D.~{Wagner}, ``Towards evaluating the robustness of neural
  networks,'' in \emph{2017 IEEE Symposium on Security and Privacy (SP)}, May
  2017, pp. 39--57.

\bibitem[Madry et~al.(2018)Madry, Makelov, Schmidt, Tsipras, and
  Vladu]{DLResistance}
\BIBentryALTinterwordspacing
A.~Madry, A.~Makelov, L.~Schmidt, D.~Tsipras, and A.~Vladu, ``Towards deep
  learning models resistant to adversarial attacks,'' in \emph{International
  Conference on Learning Representations}, 2018. [Online]. Available:
  \url{https://openreview.net/forum?id=rJzIBfZAb}
\BIBentrySTDinterwordspacing

\bibitem[Sharif et~al.(2016)Sharif, Bhagavatula, Bauer, and Reiter]{faceAd}
\BIBentryALTinterwordspacing
M.~Sharif, S.~Bhagavatula, L.~Bauer, and M.~K. Reiter, ``Accessorize to a
  crime: Real and stealthy attacks on state-of-the-art face recognition,'' in
  \emph{Proceedings of the 2016 ACM SIGSAC Conference on Computer and
  Communications Security}, ser. CCS '16.\hskip 1em plus 0.5em minus
  0.4em\relax New York, NY, USA: ACM, 2016, pp. 1528--1540. [Online].
  Available: \url{http://doi.acm.org/10.1145/2976749.2978392}
\BIBentrySTDinterwordspacing

\bibitem[Brown et~al.(2017)Brown, Man{\'e}, Roy, Abadi, and Gilmer]{advpatch}
T.~B. Brown, D.~Man{\'e}, A.~Roy, M.~Abadi, and J.~Gilmer, ``Adversarial
  patch,'' \emph{arXiv preprint arXiv:1712.09665}, 2017.

\bibitem[Kurakin et~al.(2018)Kurakin, Goodfellow, and Bengio]{physicaladv}
A.~Kurakin, I.~J. Goodfellow, and S.~Bengio, ``Adversarial examples in the
  physical world,'' in \emph{Artificial Intelligence Safety and
  Security}.\hskip 1em plus 0.5em minus 0.4em\relax Chapman and Hall/CRC, 2018,
  pp. 99--112.

\bibitem[Athalye et~al.(2018{\natexlab{a}})Athalye, Engstrom, Ilyas, and
  Kwok]{synadv}
A.~Athalye, L.~Engstrom, A.~Ilyas, and K.~Kwok, ``Synthesizing robust
  adversarial examples,'' in \emph{International Conference on Machine
  Learning}, 2018, pp. 284--293.

\bibitem[Papernot et~al.(2016)Papernot, McDaniel, Wu, Jha, and
  Swami]{distillation}
N.~Papernot, P.~McDaniel, X.~Wu, S.~Jha, and A.~Swami, ``Distillation as a
  defense to adversarial perturbations against deep neural networks,'' in
  \emph{2016 IEEE Symposium on Security and Privacy (SP)}.\hskip 1em plus 0.5em
  minus 0.4em\relax IEEE, 2016, pp. 582--597.

\bibitem[Sinha et~al.(2018)Sinha, Namkoong, and
  Duchi]{DistributionalRobustness}
\BIBentryALTinterwordspacing
A.~Sinha, H.~Namkoong, and J.~Duchi, ``Certifiable distributional robustness
  with principled adversarial training,'' in \emph{International Conference on
  Learning Representations}, 2018. [Online]. Available:
  \url{https://openreview.net/forum?id=Hk6kPgZA-}
\BIBentrySTDinterwordspacing

\bibitem[Xu et~al.(2017)Xu, Evans, and Qi]{FeatureSqueezing}
W.~Xu, D.~Evans, and Y.~Qi, ``Feature squeezing: Detecting adversarial examples
  in deep neural networks,'' \emph{arXiv preprint arXiv:1704.01155}, 2017.

\bibitem[Athalye et~al.(2018{\natexlab{b}})Athalye, Carlini, and
  Wagner]{athalye2018obfuscated}
A.~Athalye, N.~Carlini, and D.~Wagner, ``Obfuscated gradients give a false
  sense of security: Circumventing defenses to adversarial examples,'' in
  \emph{International Conference on Machine Learning}, 2018, pp. 274--283.

\bibitem[Shafahi et~al.(2019)Shafahi, Huang, Studer, Feizi, and
  Goldstein]{AdInevitable}
\BIBentryALTinterwordspacing
A.~Shafahi, W.~R. Huang, C.~Studer, S.~Feizi, and T.~Goldstein, ``Are
  adversarial examples inevitable?'' in \emph{International Conference on
  Learning Representations}, 2019. [Online]. Available:
  \url{https://openreview.net/forum?id=r1lWUoA9FQ}
\BIBentrySTDinterwordspacing

\bibitem[Fawzi et~al.(2018)Fawzi, Fawzi, and Frossard]{AnalysisRobustness}
\BIBentryALTinterwordspacing
A.~Fawzi, O.~Fawzi, and P.~Frossard, ``Analysis of classifiers' robustness to
  adversarial perturbations,'' \emph{Machine Learning}, vol. 107, no.~3, pp.
  481--508, Mar 2018. [Online]. Available:
  \url{https://doi.org/10.1007/s10994-017-5663-3}
\BIBentrySTDinterwordspacing

\bibitem[Huang(2017)]{Huang}
H.~Huang, ``Asymptotic behavior of support vector machine for spiked population
  model,'' \emph{The Journal of Machine Learning Research}, vol.~18, no.~1, pp.
  1472--1492, 2017.

\bibitem[Marron et~al.(2007)Marron, Todd, and Ahn]{DistanceWeighted}
\BIBentryALTinterwordspacing
J.~S. Marron, M.~J. Todd, and J.~Ahn, ``Distance-weighted discrimination,''
  \emph{Journal of the American Statistical Association}, vol. 102, no. 480,
  pp. 1267--1271, 2007. [Online]. Available:
  \url{http://www.jstor.org/stable/27639976}
\BIBentrySTDinterwordspacing

\bibitem[Schmidt et~al.(2018)Schmidt, Santurkar, Tsipras, Talwar, and
  Madry]{AdvMoreData}
L.~Schmidt, S.~Santurkar, D.~Tsipras, K.~Talwar, and A.~Madry, ``Adversarially
  robust generalization requires more data,'' in \emph{Advances in Neural
  Information Processing Systems}, 2018, pp. 5014--5026.

\bibitem[Wang et~al.(2018)Wang, Jha, and Chaudhuri]{advnn}
Y.~Wang, S.~Jha, and K.~Chaudhuri, ``Analyzing the robustness of nearest
  neighbors to adversarial examples,'' in \emph{International Conference on
  Machine Learning}, 2018, pp. 5120--5129.

\bibitem[Schott et~al.(2019)Schott, Rauber, Bethge, and
  Brendel]{AdvRobustMNIST}
L.~Schott, J.~Rauber, M.~Bethge, and W.~Brendel, ``Towards the first
  adversarially robust neural network model on mnist,'' in \emph{Seventh
  International Conference on Learning Representations (ICLR 2019)}, 2019, pp.
  1--16.

\bibitem[O'Brien(1974)]{Linftynoise}
G.~O'Brien, ``Limit theorems for the maximum term of a stationary process,''
  \emph{The Annals of Probability}, pp. 540--545, 1974.

\end{thebibliography}
\section{Appendix}\label{sec:appendix}
\subsection{Proof of Lemma~1}\label{sec:app}

\begin{lem}\label{lem:set} The defining sets for $\varepsilon$-adversarial and $(\varepsilon,\delta)$-strong adversarial examples are given by:
\begin{equation}\label{eq:set-semi}
\Omega_\varepsilon = \Omega(\varepsilon \bs{v}_0)\cup \Omega(-\varepsilon\bs{v}_0); \quad \Omega_{\varepsilon,\delta} = \Omega(\bs{u}_2)\cup\Omega(-\bs{u}_2)
\end{equation}
where $\bs{u}_2=\beta\bs{\mu}_0+\sqrt{\varepsilon^2-\beta^2}\bs{n}_0, \beta=\min(\varepsilon\cos\theta,\delta)$.
\end{lem}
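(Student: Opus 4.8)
The plan is to establish both identities by the same two-step argument. The ``$\supseteq$'' inclusions are immediate: the perturbations $\pm\varepsilon\bs{v}_0$ have norm $\varepsilon$, and $\pm\bs{u}_2$ have norm $\sqrt{\beta^2+(\varepsilon^2-\beta^2)}=\varepsilon$ with $|(\pm\bs{u}_2)\cdot\bs{\mu}_0|=\beta=\min(\varepsilon\cos\theta,\delta)\le\delta$, so all of these are admissible perturbations in Definitions~\ref{def:adv} and~\ref{def:s-adv}; hence any $\bs{x}\in\Omega(\pm\varepsilon\bs{v}_0)$ lies in $\Omega_\varepsilon$ and any $\bs{x}\in\Omega(\pm\bs{u}_2)$ lies in $\Omega_{\varepsilon,\delta}$. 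The work is in the ``$\subseteq$'' inclusions.

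First I would record the elementary fact that underlies everything: writing $m(\bs{x})=\bs{w}\cdot\bs{x}+b$, the perturbed point $\bs{x}+\bs{v}$ is classified differently from $\bs{x}$ exactly when $m(\bs{x})$ and $m(\bs{x}+\bs{v})=m(\bs{x})+\bs{w}\cdot\bs{v}$ lie on opposite sides of $0$; ignoring the measure-zero hyperplane $m(\bs{x})=0$ (irrelevant for the probability computations in Section~\ref{sec:rate}), this happens iff $\bs{w}\cdot\bs{v}$ has sign opposite to $m(\bs{x})$ and $|\bs{w}\cdot\bs{v}|\ge|m(\bs{x})|$. Consequently, for a symmetric set $S$ of admissible perturbations, $\bs{x}$ has an admissible label-flipping perturbation iff $|m(\bs{x})|\le \norm{\bs{w}}\cdot\max_{\bs{v}\in S}|\bs{v}_0\cdot\bs{v}|$, and the required sign can always be matched because $S=-S$; moreover the flip is witnessed by whichever of $\pm\bs{v}^{\ast}$ has the right sign, where $\bs{v}^{\ast}\in S$ attains the maximum. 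So for the adversarial case, taking $S=\{\norm{\bs{v}}\le\varepsilon\}$ and using Cauchy--Schwarz, the maximum of $|\bs{v}_0\cdot\bs{v}|$ is $\varepsilon$, attained at $\pm\varepsilon\bs{v}_0$; splitting on $C(\bs{x})\in\{+,-\}$ then places $\bs{x}$ in $\Omega(-\varepsilon\bs{v}_0)$ or $\Omega(\varepsilon\bs{v}_0)$, proving $\Omega_\varepsilon\subseteq\Omega(\varepsilon\bs{v}_0)\cup\Omega(-\varepsilon\bs{v}_0)$.

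For the strong-adversarial case I would take $S=A:=\{\bs{v}:\norm{\bs{v}}\le\varepsilon,\ |\bs{v}\cdot\bs{\mu}_0|\le\delta\}$, which is still symmetric, and compute $M:=\max_{\bs{v}\in A}\bs{v}_0\cdot\bs{v}$ together with its maximizer. Expand $\bs{v}=a\bs{\mu}_0+c\bs{n}_0+\bs{r}$ in an orthonormal basis extending $\{\bs{\mu}_0,\bs{n}_0\}$ (with $\bs{r}\perp\bs{\mu}_0,\bs{n}_0$); since $\bs{v}_0=\cos\theta\,\bs{\mu}_0+\sin\theta\,\bs{n}_0$ the objective is $a\cos\theta+c\sin\theta$ and the constraints are $|a|\le\delta$ and $a^2+c^2+\norm{\bs{r}}^2\le\varepsilon^2$. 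Because $0\le\theta\le\pi/2$ gives $\cos\theta,\sin\theta\ge0$, at the optimum $\bs{r}=\bs{0}$, $a,c\ge0$, and the norm constraint is active, reducing the problem to maximizing $f(a)=a\cos\theta+\sqrt{\varepsilon^2-a^2}\,\sin\theta$ over $0\le a\le\min(\delta,\varepsilon)$. A one-variable computation shows $f$ is increasing up to its unique interior critical point $a=\varepsilon\cos\theta$ and decreasing afterwards, so the constrained maximizer is $a=\beta:=\min(\varepsilon\cos\theta,\delta)$, giving $M=\beta\cos\theta+\sqrt{\varepsilon^2-\beta^2}\,\sin\theta=g(\varepsilon,\delta,\theta)$ attained at $\bs{v}=\bs{u}_2$ (and $-M$ at $-\bs{u}_2$). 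Plugging $M\norm{\bs{w}}=g(\varepsilon,\delta,\theta)\norm{\bs{w}}$ into the margin criterion and splitting on $C(\bs{x})$ yields $\Omega_{\varepsilon,\delta}\subseteq\Omega(\bs{u}_2)\cup\Omega(-\bs{u}_2)$.

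The main obstacle is the constrained optimization in the last step: one has to justify carefully that the orthogonal remainder $\bs{r}$ and the negative orthants can be discarded, that the $\ell_2$ constraint is tight at the optimum, and that the two regimes $\varepsilon\cos\theta\le\delta$ and $\varepsilon\cos\theta>\delta$ are correctly unified by $\beta=\min(\varepsilon\cos\theta,\delta)$. The degenerate case $\theta=0$, where $\bs{n}=\bs{0}$ and $\bs{n}_0$ is only defined up to an arbitrary choice of unit vector orthogonal to $\bs{\mu}_0$, should be flagged; it is harmless because then $\sin\theta=0$, so $\bs{w}\cdot\bs{u}_2=\delta\norm{\bs{w}}$ and the set $\Omega(\pm\bs{u}_2)$ is independent of the choice. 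The sign bookkeeping in the $C(\bs{x})=\pm$ split and the boundary case $m(\bs{x})=0$ are routine.
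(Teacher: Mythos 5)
Your proposal is correct and follows essentially the same route as the paper's proof: both reduce the ``$\subseteq$'' inclusions to showing that $\pm\varepsilon\bs{v}_0$ (resp.\ $\pm\bs{u}_2$) maximize $|\bs{w}\cdot\bs{v}|$ over the admissible perturbation set, via Cauchy--Schwarz in the first case and a decomposition of $\bs{v}$ along $\bs{\mu}_0$, $\bs{n}_0$, and the orthogonal complement in the second. Your one-variable calculus treatment of $f(a)=a\cos\theta+\sqrt{\varepsilon^2-a^2}\,\sin\theta$ replaces the paper's geometric ``corner versus tangent point'' argument, and your explicit handling of the degenerate case $\theta=0$ and the boundary $\bs{w}\cdot\bs{x}+b=0$ are minor refinements rather than a different method.
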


\begin{proof}

{\bf Proof of the adversarial defining set formula.}
Since it is obvious from the definition that $\Omega_\varepsilon=\bigcup_{\norm{\bs{v}}\leq\varepsilon}\Omega(\bs{v}) \supseteq \Omega(\varepsilon \bs{v}_0)\cup \Omega(-\varepsilon\bs{v}_0)$, we only need to show that $\Omega_\varepsilon \subseteq \Omega(\varepsilon \bs{v}_0)\cup \Omega(-\varepsilon\bs{v}_0)$. That is, for any data point $\bs{x} \in \Omega_\varepsilon$, either $\bs{x}+\varepsilon \bs{v}_0$ or $\bs{x}-\varepsilon \bs{v}_0$ changes its classification.

\noindent We now claim that the last statement is equivalent to that $\varepsilon \bs{v}_0$ is the solution to the optimization problem:
\be\label{eq:opt-sadv}
\max\bs{w}\cdot\bs{v}, \qquad \bs{v} \in D_\varepsilon=\{\bs{v}\in\mathbb{R}^d:\norm{\bs{v}}\leq\varepsilon\}.
\ee
To see this, if $\varepsilon \bs{v}_0$ is the solution, then $\bs{w}\cdot\bs{v} \leq \bs{w}\cdot (\varepsilon \bs{v}_0) = \varepsilon \norm{\bs{w}}$ for all $\bs{v} \in D_1$. Now for a $\bs{x}$ classified into the '$-$' class and $\bs{x} \in \Omega_\varepsilon$, then $\bs{w}\cdot\bs{x}+b<0$ and $\bs{w}\cdot(\bs{x}+\bs{v})+b>0$. Hence
$$
\bs{w}\cdot(\bs{x}+\varepsilon \bs{v}_0)+b \geq \bs{w}\cdot\bs{x} + \bs{w}\cdot\bs{v} +b >0,
$$
that is, $\bs{x}+\varepsilon \bs{v}_0$ is misclassified into the '$+$' class thus $\bs{x} \in \Omega(\varepsilon \bs{v}_0)$. By symmetry, $- \varepsilon {\bs{v}_0}$ is the solution to $\min\bs{w}\cdot\bs{v}$ when $\bs{v} \in D_1$, and hence $- \varepsilon \norm{\bs{w}} \leq \bs{w}\cdot\bs{v}$ also for all $\bs{v} \in D_1$. Hence for a $\bs{x}$ classified into the '$+$' class and $\bs{x} \in \Omega_\varepsilon$, similarly we have that $\bs{x}-\varepsilon \bs{v}_0$ is misclassified into the '$-$' class thus $\bs{x} \in \Omega(-\varepsilon \bs{v}_0)$.

\noindent Finally, $\varepsilon \bs{v}_0$ is indeed the solution to \eqref{eq:opt-sadv} due to the Cauchy-Schwartz inequality $\bs{w}\cdot\bs{v}\leq\norm{\bs{w}}\norm{\bs{v}} \leq \norm{\bs{w}} \varepsilon$. The first equality holds if and only if $\bs{v}$ is along the same direction of $\bs{w}$, thus $\bs{v}= c \bs{v}_0$. The second equality holds if and only if $\norm{\bs{v}}=\varepsilon$, thus $\bs{v}= \varepsilon \bs{v}_0$. This finishes the proof for $\Omega_\varepsilon = \Omega(\varepsilon \bs{v}_0)\cup \Omega(-\varepsilon\bs{v}_0)$.

\noindent{\bf Proof of the strong adversarial defining set formula.}
The proof follows exactly the outline of the adversarial case proof above. Only now we need to prove that $\bs{u}_2$ is the solution to the optimization problem
\be\label{eq:opt-adv}
\max\bs{w}\cdot\bs{v}, \qquad \bs{v} \in D_{\varepsilon,\delta}=\{\bs{v}\in\mathbb{R}^d:\norm{\bs{v}}\leq\varepsilon,|\bs{v}\cdot\bs{\mu}_0|\leq\delta\}.
\ee
We can decompose $\bs{w}$ as $\bs{w}=(\bs{w}\cdot\bs{\mu}_0)\bs{\mu}_0+(\bs{w}\cdot\bs{n}_0)\bs{n}_0$, accordingly, $\bs{v}$ can be decomposed as $\bs{v}=(\bs{v}\cdot\bs{\mu}_0)\bs{\mu}_0+(\bs{v}\cdot\bs{n}_0)\bs{n}_0+(\bs{v}\cdot\bs{m}_0)\bs{m}_0$, where $\bs{m}_0$ is the unit normal vector of the plane spanned by $\bs{\mu}_0$ and $\bs{w}$, therefore
\begin{equation}\label{eq:wv}
\bs{w}\cdot\bs{v}=(\bs{w}\cdot\bs{\mu}_0)(\bs{v}\cdot\bs{\mu}_0)+(\bs{w}\cdot\bs{n}_0)(\bs{v}\cdot\bs{n}_0)=\cos \theta (\bs{v}\cdot\bs{\mu}_0)+\sin \theta (\bs{v}\cdot\bs{n}_0):=x \cos \theta + y \sin \theta.
\end{equation}
The optimization problems becomes to maximize $x \cos \theta + y \sin \theta$ in \eqref{eq:wv} under the constraints $x^2+y^2=\varepsilon^2-(\bs{v}\cdot\bs{m}_0)^2, |x|\leq\delta$. This is a linear programming setup, it is easy to see that first we must have $\bs{v}\cdot\bs{m}_0=0$ to reach maximum. Then the solution is either at the corner $(x,y)=(\delta,\sqrt{\varepsilon^2-\delta^2})$ or at the tangent point $(x,y)=\varepsilon(\cos \theta, \sin \theta)$ as in semi-adversarial case. If $\varepsilon \cos \theta < \delta$, the solution is at the tangent point $(x,y)=\varepsilon(\cos \theta, \sin \theta)$. Otherwise, the solution is at the corner $(x,y)=(\delta,\sqrt{\varepsilon^2-\delta^2})$. Combining the two cases, we arrive at the formula for $\bs{u}_2$ under equation~\eqref{eq:set-semi}.

\end{proof}

\subsection{$\ell_p$-Adversarial and $\ell_p$-Strong-Adversarial Rates}\label{sec: lp}

\noindent In literature, the adversarial examples have been studied under different norms. Here we extend the analysis in main text to the general $\ell_p$ norms with $p\in [1,\infty]$\footnote{We did not consider $p\in [0,1)$ because in this case, $d_p$ is not a metric, although practically, $\ell_0$ is considered.}.
That is, we use the distance metric $d_p(\bs{x},\bs{y})=\norm{\bs{x}-\bs{y}}_p$. 
Also, we denote $\ell_q$ as the dual of $\ell_p$, i.e., $1/p+1/q=1$.

\noindent Therefore the classical adversarial examples definition becomes the following.
\begin{defi}\label{def:lp-adv}
Given a classifier $C$, an $\varepsilon$-$\ell_p$-adversarial example of a data vector $\bs{x}$  is another data vector $\bs{x}'$ such that $d_p(\bs{x},\bs{x}') \leq \varepsilon$ but $C(\bs{x})\neq C(\bs{x}')$.
\end{defi}

\noindent As before, we restrict the perturbation amount along the signal direction $\bs{\mu}_0$ to $\delta$ for strong-adversarial examples.
\begin{defi}\label{def:lp-s-adv}
Given a classifier $C$, an $(\varepsilon,\delta)$-$\ell_p$-strong-adversarial example of a data vector $\bs{x}$  is another data vector $\bs{x}'$ such that $d_p(\bs{x},\bs{x}') \leq \varepsilon$ and $|(\bs{x}-\bs{x}')\cdot\bs{\mu}_0|\leq \delta$ but $C(\bs{x})\neq C(\bs{x}')$.
\end{defi}

\subsection{$\ell_p$-Adversarial Rate and Existence of $\ell_p$-Adversarial-Robust Classifiers}\label{sec:lp-adv}

\noindent The analysis follows the same outline as the analysis for the $\ell_2$ norm case. We first characterize the defining set $\Omega_{\varepsilon|p}=\{\bs{x}:\bs{x}\ \mbox{has\ an\ } \varepsilon-\ell_p\mbox{-adversarial\ example}\}$. 

\begin{lem}\label{lem:lp-adv-set}
The defining sets for $\varepsilon$-$\ell_p$-adversarial examples is given by:
\begin{equation}\label{eq:lp-adv-set}
\Omega_{\varepsilon |p}=\Omega(\varepsilon\bs{v}_{0|p})\cup\Omega(-\varepsilon\bs{v}_{0|p})
\end{equation}
where $\bs{v}_{0|p}$ is the $d$-dimensional vector with component $(\bs{v}_{0|p})_i=\sgn (w_i)\cdot (|w_i|/\norm{\bs{w}}_q)^{q-1}$. 
\end{lem}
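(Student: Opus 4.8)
The plan is to mimic exactly the structure of the $\ell_2$ proof of Lemma~1, replacing the Cauchy--Schwarz step by its $\ell_p$/$\ell_q$ analogue, namely H\"older's inequality. First I would observe that the inclusion $\Omega_{\varepsilon|p} = \bigcup_{\norm{\bs{v}}_p\le\varepsilon}\Omega(\bs{v}) \supseteq \Omega(\varepsilon\bs{v}_{0|p})\cup\Omega(-\varepsilon\bs{v}_{0|p})$ is immediate once we check that $\norm{\bs{v}_{0|p}}_p \le 1$ (indeed $=1$), so only the reverse inclusion $\Omega_{\varepsilon|p}\subseteq\Omega(\varepsilon\bs{v}_{0|p})\cup\Omega(-\varepsilon\bs{v}_{0|p})$ needs work. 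As in the $\ell_2$ case, this reduces to showing that $\varepsilon\bs{v}_{0|p}$ solves the optimization problem $\max \bs{w}\cdot\bs{v}$ subject to $\norm{\bs{v}}_p\le\varepsilon$: once that is known, for $\bs{x}$ in the `$-$' class with an $\varepsilon$-$\ell_p$-adversarial example we get $\bs{w}\cdot(\bs{x}+\varepsilon\bs{v}_{0|p})+b \ge \bs{w}\cdot\bs{x} + \bs{w}\cdot\bs{v} + b > 0$ for the witnessing $\bs{v}$, so $\bs{x}\in\Omega(\varepsilon\bs{v}_{0|p})$; the `$+$' class case follows by the symmetric fact that $-\varepsilon\bs{v}_{0|p}$ minimizes $\bs{w}\cdot\bs{v}$ over the same ball.

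The core computation is therefore: the maximum of $\bs{w}\cdot\bs{v}$ over $\norm{\bs{v}}_p\le\varepsilon$ equals $\varepsilon\norm{\bs{w}}_q$, attained uniquely (up to the boundary) at $\bs{v}=\varepsilon\bs{v}_{0|p}$. I would get the upper bound $\bs{w}\cdot\bs{v}\le\norm{\bs{w}}_q\norm{\bs{v}}_p\le\varepsilon\norm{\bs{w}}_q$ from H\"older's inequality, then verify the claimed maximizer achieves it: with $(\bs{v}_{0|p})_i=\sgn(w_i)(|w_i|/\norm{\bs{w}}_q)^{q-1}$ one computes $\bs{w}\cdot\varepsilon\bs{v}_{0|p} = \varepsilon\sum_i |w_i|\,(|w_i|/\norm{\bs{w}}_q)^{q-1} = \varepsilon\,\norm{\bs{w}}_q^{-(q-1)}\sum_i|w_i|^q = \varepsilon\norm{\bs{w}}_q^{q-q+1}=\varepsilon\norm{\bs{w}}_q$, using $1/p+1/q=1$ so that the H\"older equality condition $|v_i|^p \propto |w_i|^q$ is met; one also checks $\norm{\varepsilon\bs{v}_{0|p}}_p=\varepsilon$ similarly. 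Uniqueness follows because H\"older's inequality is an equality only when $|v_i|^p$ is proportional to $|w_i|^q$ and $v_i$ has the sign of $w_i$, and the norm constraint is saturated, which pins down $\bs{v}=\varepsilon\bs{v}_{0|p}$.

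The one place requiring a little care is the boundary cases $p=1$ (so $q=\infty$) and $p=\infty$ (so $q=1$), where the formula for $\bs{v}_{0|p}$ must be read as a limit: for $p=1$, $q-1=\infty$ and $\bs{v}_{0|1}$ concentrates on the coordinate(s) of maximal $|w_i|$ with value $\sgn(w_i)$, and the maximizer need not be unique if the maximum of $|w_i|$ is attained at several coordinates; for $p=\infty$, $q=1$ and $\bs{v}_{0|\infty}=(\sgn(w_1),\dots,\sgn(w_d))$. I expect the main (minor) obstacle to be stating the result cleanly so that the interior case $1<p<\infty$ and the two endpoints are all covered by the same H\"older argument, perhaps with a remark that in the non-strictly-convex endpoint cases the maximizer is one among possibly several, which does not affect the set identity \eqref{eq:lp-adv-set} since we only need \emph{some} worst-case perturbation. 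Everything else is a direct transcription of the $\ell_2$ argument already given in Section~\ref{sec:app}.
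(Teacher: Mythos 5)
Your proposal follows essentially the same route as the paper's own proof: reduce the set identity to the optimization problem $\max \bs{w}\cdot\bs{v}$ over the $\ell_p$-ball and solve it via H\"older's inequality together with its equality conditions, exactly as the paper does. The only difference is that you explicitly treat the endpoint cases $p=1$ and $p=\infty$, where the exponent formula for $\bs{v}_{0|p}$ must be interpreted as a limit and the maximizer may be non-unique; the paper glosses over this, so your added care is a small improvement rather than a departure.
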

\noindent Here $\sgn$ denotes the sign function. That is, $\sgn(x)=1$ for $x>0$; $\sgn(x)=-1$ for $x<0$ and $\sgn(0)=0$. 

\noindent Furthermore, we denote the $p$-th power of a vector $\bs{v}=(v_1,...,v_d)$ as taking the power component-wise. That is, $(\bs{v}^p)_i = \sgn (v_i)\cdot |v_i|^{p}$. 
Then the above $\bs{v}_{0|p}$ can be rewritten as $\bs{v}_{0|p}=(\bs{w}/\norm{\bs{w}}_q)^{q-1}$.

\begin{proof}
The proof is similar to the proof of Lemma~\ref{lem:set}. Following the derivations there, we only need to show that $\bs{v}_0=(\bs{w}/\norm{\bs{w}}_q)^{q-1}$ is the solution to the optimization problem:
\begin{equation}\label{opt-lp-adv}
\max |\bs{w}\cdot\bs{v}|, \qquad \bs{v}\in D_{\varepsilon|p}=\{ \bs{v}\in\mathbb{R}^d:\norm{\bs{v}}_p\leq\varepsilon\}.
\end{equation}
By Holder's inequality, we have $|\bs{w}\cdot\bs{v}| \leq \norm{\bs{w}}_q\norm{\bs{v}}_p \le \varepsilon\norm{\bs{w}}_q$.
 
\noindent For the first ``$\leq$'' to be ``$=$'', $\bs{v}^p$ has to be proportional to $\bs{w}^q$. That is, for some constant $c$, $\bs{v}= c \bs{w}^{q/p} = c \bs{w}^{q-1}$. For the second  ``$\leq$'' to be ``$=$'', we need $\varepsilon = \norm{\bs{v}}_p$. That is, 
$$
\varepsilon^p = \norm{\bs{v}}_p^p = c^p \sum_{i=1}^d |v_i|^p = c^p \sum_{i=1}^d (|w_i|^{q/p})^p = c^p \sum_{i=1}^d (|w_i|^{q}) = c^p \norm{\bs{w}}_q^q.
$$
Hence we have $\varepsilon=c  \norm{\bs{w}}_q^{q/p} = c  \norm{\bs{w}}_q^{q-1}$, and thus $c = \varepsilon \norm{\bs{w}}_q^{1-q}$. 
Plug $c$ into $\bs{v}= c \bs{w}^{q-1}$, we get $\bs{v}=\varepsilon (\bs{w}/\norm{\bs{w}}_q)^{q-1}= \varepsilon \bs{v}_{0|p}$. This is the solution to the optimization problem. Hence arguments similar to those for the proof of Lemma~\ref{lem:set} above show that the equation~\eqref{eq:lp-adv-set} gives the defining set here.
\end{proof}

\noindent With the characterization lemma~\ref{lem:lp-adv-set}, we can then compute the adversarial rate as before. Note that the misclassification rate has nothing to do with the perturbation for adversarial examples. Thus regardless of which  $\ell_p$ norm is used to measure the perturbation, the misclassification rate is still given by the same formula as before.
\begin{align}\label{eq:app_pm}
p_m  = 1 - 0.5\bigg[\Phi\bigg(\frac{\bs{w}\cdot\bs{\mu}+b'}{\norm{\bs{w}}\sigma}\bigg) + \Phi\bigg(\frac{\bs{w}\cdot\bs{\mu}-b'}{\norm{\bs{w}}\sigma}\bigg) \bigg].
\end{align}

\noindent The calculation of $\ell_p$-adversarial rate follows $\ell_2$-adversarial rate calculation exactly, except that the term $\varepsilon\norm{\bs{w}}_2$ is replaced by $\bs{w}\cdot\varepsilon\bs{v}_{0|p}=\varepsilon\norm{\bs{w}}_q$. Therefore, we have the following result.

\begin{thm}\label{thm:lp-adv-rate}
The overall $\ell_p$-adversarial rate of a linear classifier for the balanced Gaussian mixture data is
\begin{equation}\label{eq:lp-adv-rate}
p_{adv|p} =1-p_m-0.5\bigg[\Phi\bigg(\frac{\bs{w}\cdot\bs{\mu}+b'}{\norm{\bs{w}}_2\sigma}-\frac{\norm{\bs{w}}_q}{\norm{\bs{w}}_2}\frac{\varepsilon}{\sigma}\bigg)+\Phi\bigg(\frac{\bs{w}\cdot\bs{\mu}-b'}{\norm{\bs{w}}_2\sigma}-\frac{\norm{\bs{w}}_q}{\norm{\bs{w}}_2}\frac{\varepsilon}{\sigma}\bigg)\bigg].
\end{equation}
\end{thm}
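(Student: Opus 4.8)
The plan is to follow the derivation of Theorem~\ref{thm:adv-rate} essentially verbatim, changing only the single place where the perturbation geometry enters. By the characterization Lemma~\ref{lem:lp-adv-set}, a data vector $\bs{x}$ has an $\varepsilon$-$\ell_p$-adversarial example if and only if $\bs{x}\in\Omega(\varepsilon\bs{v}_{0|p})\cup\Omega(-\varepsilon\bs{v}_{0|p})$, so exactly as in the $\ell_2$ case it suffices to track the single scalar $\bs{w}\cdot(\varepsilon\bs{v}_{0|p})$, which by the equality case of H\"older's inequality established inside the proof of Lemma~\ref{lem:lp-adv-set} equals $\varepsilon\norm{\bs{w}}_q$. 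Crucially, the data noise is still isotropic Gaussian $N(\bs{\mu}_\pm,\sigma^2\bs{I}_d)$, so $\bs{w}\cdot\bs{x}+b$ remains univariate Gaussian with mean $\bs{w}\cdot\bs{\mu}_\pm+b$ and variance $\norm{\bs{w}}_2^2\sigma^2$; only the decision-boundary crossing threshold changes, from $\varepsilon\norm{\bs{w}}_2$ in the $\ell_2$ case to $\varepsilon\norm{\bs{w}}_q$ here.

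First I would write the `$+$'-class contribution: a correctly classified `$+$' point has an $\ell_p$-adversarial example iff $\bs{x}\in\Omega(-\varepsilon\bs{v}_{0|p})$, i.e. $0<\bs{w}\cdot\bs{x}+b<\varepsilon\norm{\bs{w}}_q$, which, standardized against $N(\bs{w}\cdot\bs{\mu}_++b,\norm{\bs{w}}_2^2\sigma^2)$, gives
$$
0.5\bigg[\Phi\bigg(\frac{\varepsilon\norm{\bs{w}}_q-(\bs{w}\cdot\bs{\mu}_++b)}{\norm{\bs{w}}_2\sigma}\bigg)-\Phi\bigg(\frac{-(\bs{w}\cdot\bs{\mu}_++b)}{\norm{\bs{w}}_2\sigma}\bigg)\bigg].
$$
Next I would carry out the symmetric `$-$'-class computation, then substitute $\bs{w}\cdot\bs{\mu}_\pm+b=\pm\bs{w}\cdot\bs{\mu}+b'$ with $b'=\bs{w}\cdot\bar{\bs{\mu}}+b$, exactly as in equations~\eqref{eq:adv+}--\eqref{eq:adv-rate'}. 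Finally, using $\Phi(-t)=1-\Phi(t)$ and the unchanged misclassification formula~\eqref{eq:app_pm} to collapse the two $\varepsilon$-free $\Phi$ terms into $1-p_m$, the remaining two $\Phi$ terms carry the shift $\varepsilon\norm{\bs{w}}_q/(\norm{\bs{w}}_2\sigma)=(\norm{\bs{w}}_q/\norm{\bs{w}}_2)(\varepsilon/\sigma)$, which is precisely equation~\eqref{eq:lp-adv-rate}.

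There is no genuine obstacle here beyond careful bookkeeping: the entire geometric content --- that the worst-case $\ell_p$-bounded perturbation of the linear score is $\varepsilon\norm{\bs{w}}_q$ --- is already packaged in Lemma~\ref{lem:lp-adv-set}, and the probabilistic content is identical to that of Theorem~\ref{thm:adv-rate}. The one point demanding vigilance is not to conflate the two norms: the standardization of the Gaussian score uses $\norm{\bs{w}}_2$ (because the noise covariance is $\sigma^2\bs{I}_d$, independent of $p$), whereas the crossing threshold uses $\norm{\bs{w}}_q$, so it is the ratio $\norm{\bs{w}}_q/\norm{\bs{w}}_2$ that surfaces in the formula --- reducing to $1$ when $p=q=2$ and recovering Theorem~\ref{thm:adv-rate} as a special case.
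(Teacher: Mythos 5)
Your proposal is correct and matches the paper's own argument, which likewise reduces everything to the $\ell_2$ derivation with the single substitution $\varepsilon\norm{\bs{w}}_2 \mapsto \bs{w}\cdot(\varepsilon\bs{v}_{0|p})=\varepsilon\norm{\bs{w}}_q$ supplied by Lemma~\ref{lem:lp-adv-set}. Your explicit caution about keeping $\norm{\bs{w}}_2$ in the Gaussian standardization while $\norm{\bs{w}}_q$ appears only in the crossing threshold is exactly the right bookkeeping point.
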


\noindent Now we have the $\ell_p$-adversarial error formula as the following. 
\begin{align}\label{eq:lp-adv-err}
p_{err|p} &= p_{adv|p} + p_m = 1-0.5\bigg[\Phi\bigg(\frac{\bs{w}\cdot\bs{\mu}+b'}{\norm{\bs{w}}_2\sigma}-\frac{\norm{\bs{w}}_q}{\norm{\bs{w}}_2}\frac{\varepsilon}{\sigma}\bigg)+\Phi\bigg(\frac{\bs{w}\cdot\bs{\mu}-b'}{\norm{\bs{w}}_2\sigma}-\frac{\norm{\bs{w}}_q}{\norm{\bs{w}}_2}\frac{\varepsilon}{\sigma}\bigg)\bigg]\nonumber\\
& \geq 1 - \Phi\bigg(\frac{\bs{w}\cdot\bs{\mu}}{\norm{\bs{w}}_2\sigma}-\frac{\norm{\bs{w}}_q}{\norm{\bs{w}}_2}\frac{\varepsilon}{\sigma}\bigg) = 1 - \Phi\bigg(\frac{\norm{\bs{\mu}}_2}{\sigma}\cos\theta-\frac{\norm{\bs{w}}_q}{\norm{\bs{w}}_2}\frac{\varepsilon}{\sigma}\bigg)
\end{align}

\noindent Corresponding to the discussions in the main text, a useful classifier only requires a signal-noise ratio (SNR) of $\norm{\mu}_2/\sigma=O(1)$ due to equation~\eqref{eq:app_pm}. 

\noindent In contrast, due to equation~\eqref{eq:lp-adv-err}, a necessary condition for the existence of a $\ell_p$-adversarial-robust classifier is 
\be\label{eq:cond}
\frac{\norm{\bs{\mu}}_2}{\sigma}-\frac{\norm{\bs{w}}_q}{\norm{\bs{w}}_2}\frac{\varepsilon}{\sigma}=O(1).
\ee

\noindent We now investigate what order of SNR $\norm{\mu}_2/\sigma$ is needed to make \eqref{eq:cond} hold.

\noindent First, we have to find the practical relevant order of $\varepsilon$ needs to be studied. The following lemma about the average $\ell_p$-norm of Gaussian noise will provide the guideline. 

\begin{lem}\label{lem:lp-noise}
Let the random vector $\bs{x}=(x_1,...,x_d)$ follows the $d$-dimensional Gaussian distribution $N(0,\sigma^2I_d)$. Then
\begin{align}\label{eq:lp-noise}
\begin{cases} E[\norm{x}_p^p]= m_p d \sigma^p,  \qquad & p\in [1, \infty),\\
 E[\norm{x}_p]=O(\sqrt{\log d}\sigma), \qquad & p=\infty,\end{cases}
\end{align}
where $m_p$ denotes the $p$-th moment of the standard Gaussian distribution. 
\end{lem}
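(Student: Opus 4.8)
\textbf{Proof plan for Lemma~\ref{lem:lp-noise}.}
The plan is to split the argument into the two regimes $p\in[1,\infty)$ and $p=\infty$, since the first is a routine linearity-of-expectation computation while the second is the classical maximal-inequality estimate for Gaussians. For the case $p\in[1,\infty)$, I would write $\norm{\bs{x}}_p^p=\sum_{i=1}^d|x_i|^p$ and take expectations termwise. Each $x_i\sim N(0,\sigma^2)$, so $x_i=\sigma z_i$ with $z_i\sim N(0,1)$, giving $E[|x_i|^p]=\sigma^p E[|z_i|^p]=\sigma^p m_p$, where $m_p=E[|z|^p]$ is the $p$-th absolute moment of the standard Gaussian (which is finite for every $p\ge 1$, equal to $2^{p/2}\Gamma((p+1)/2)/\sqrt{\pi}$, though I would not need the closed form). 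Summing over the $d$ coordinates yields $E[\norm{\bs{x}}_p^p]=m_p\,d\,\sigma^p$ exactly, which is the claimed identity.

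For the case $p=\infty$, I would establish matching upper and lower bounds of order $\sqrt{\log d}\,\sigma$ for $E[\norm{\bs{x}}_\infty]=\sigma\,E[\max_{1\le i\le d}|z_i|]$, so it suffices to show $E[\max_i|z_i|]=\Theta(\sqrt{\log d})$. For the upper bound I would use the standard sub-Gaussian maximal inequality: from $E[e^{t|z_i|}]\le 2e^{t^2/2}$ and Jensen applied to the exponential, $E[\max_i|z_i|]\le \frac{\log(2d)}{t}+\frac{t}{2}$, and optimizing over $t$ gives $E[\max_i|z_i|]\le\sqrt{2\log(2d)}=O(\sqrt{\log d})$. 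For the lower bound I would use a standard Gaussian tail estimate $\Pr[|z_i|>s]\ge c\,s^{-1}e^{-s^2/2}$ for $s\ge 1$ together with independence of the coordinates: choosing $s=\sqrt{\log d}$ shows $\Pr[\max_i|z_i|\le s]=(1-\Pr[|z_1|>s])^d\le \exp(-d\cdot\Pr[|z_1|>s])$ is bounded away from $1$, whence $E[\max_i|z_i|]=\Omega(\sqrt{\log d})$. Combining the two bounds gives $E[\norm{\bs{x}}_\infty]=O(\sqrt{\log d}\,\sigma)$ (indeed $\Theta(\sqrt{\log d}\,\sigma)$, though the statement only claims the $O$ direction), completing the proof.

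I do not anticipate any serious obstacle here; the only mild subtlety is being careful that the constant $m_p$ in the statement is understood as the $p$-th \emph{absolute} moment $E[|z|^p]$ rather than the raw moment $E[z^p]$ (which vanishes for odd $p$), and noting its finiteness for all $p\in[1,\infty)$. The $p=\infty$ half is entirely standard and I would simply cite the sub-Gaussian maximal inequality if a reference style were preferred over a self-contained derivation.
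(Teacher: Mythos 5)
Your proof is correct. For $p\in[1,\infty)$ you do exactly what the paper does: write $\norm{\bs{x}}_p^p=\sum_i|x_i|^p$, use linearity of expectation and the scaling $x_i=\sigma z_i$ to get $E[\norm{\bs{x}}_p^p]=d\,\sigma^p m_p$; your remark that $m_p$ must be read as the absolute moment $E[|z|^p]$ is a worthwhile clarification that the paper leaves implicit. The only real divergence is in the $p=\infty$ case: the paper simply cites an external large-deviation result for the expected maximum of Gaussians, whereas you give a self-contained two-sided bound --- the sub-Gaussian maximal inequality $E[\max_i|z_i|]\le\sqrt{2\log(2d)}$ via the moment-generating function and Jensen, plus a matching lower bound from the Gaussian tail estimate and independence. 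Both routes are valid; yours buys a complete, elementary argument (and in fact establishes the sharper $\Theta(\sqrt{\log d}\,\sigma)$ rather than just the $O(\cdot)$ claimed), at the cost of a little extra length, while the paper's citation keeps the appendix short. Note that for the lemma as stated only your upper bound is actually needed, which you correctly observe.
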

\begin{proof}
For $p\in [1, \infty)$, 
\begin{align}
E[\norm{x}_p^p]= E[\sum_{i=1}^d |x_i|^p] = \sum_{i=1}^d E[|x_i|^p] = d E[|x_1|^p] = d \sigma^p m_p. 
\end{align}
The $\ell_\infty$ result follows directly from the large deviation formula obtained by \citet{Linftynoise}.
\end{proof}

\noindent In the Gaussian mixture data, Lemma~\ref{lem:lp-noise} states that the average $\ell_p$ noise is $d^{1/p}\sigma m_p^{1/p}$. Therefore, for an $\eta<1$, a perturbation amount of $\varepsilon=\eta d^{1/p}\sigma m_p^{1/p}$ will be smaller than the average noise thus hard to distinguish from noise (unless it concentrates in the signal direction). Thus any practical relevant defense needs to be robust at the minimum against perturbations of order $\varepsilon=O(d^{1/p}\sigma)$. For the $\ell_\infty$, the defense needs to be robust at the minimum against perturbations of order $\varepsilon=O(\sqrt{\log d}\sigma)$.

\noindent Plug-in these $\varepsilon$ orders into the necessary condition~\eqref{eq:cond}, the existence of a $\ell_p$-adversarial-robust classifier requires at least $\frac{\norm{\bs{\mu}}_2}{\sigma} = O(\frac{\norm{\bs{w}}_q}{\norm{\bs{w}}_2}d^{1/p})$ for $p\in [1, \infty)$; and it requires at least $\frac{\norm{\bs{\mu}}_2}{\sigma} = O(\frac{\norm{\bs{w}}_q}{\norm{\bs{w}}_2}\sqrt{\log d})$ for $p=\infty$.

\noindent Next we use the norm comparison inequality to find these orders. For any $\bs{w}\in\mathbb{R}^d$ and any $0<r<s<\infty$, we have
\begin{align}\label{eq:norm-ineq}
\norm{\bs{w}}_s\leq \norm{\bs{w}}_r\leq d^{1/r-1/s}\norm{\bs{w}}_s.
\end{align}

\noindent {\bf (A)} For $1\leq p<2$, we have $2<q\leq\infty$. Using $r=2$ and $s=q$ in \eqref{eq:norm-ineq}, we get 
$$
d^{1/q-1/2}\leq \frac{\norm{\bs{w}}_q}{\norm{\bs{w}}_2}\leq 1.
$$
Plug this lower bound into the required order $\frac{\norm{\mu}_2}{\sigma} = O(\frac{\norm{\bs{w}}_q}{\norm{\bs{w}}_2}d^{1/p})$, the existence of a $\ell_p$-adversarial-robust classifier requires SNR  of at least
$$
\frac{\norm{\bs{\mu}}_2}{\sigma} = O(d^{1/q-1/2}d^{1/p}) = O(d^{1/p+1/q-1/2}) = O(d^{1/2}).
$$

\noindent {\bf (B)} For $2 < p<\infty$, then $q<2<\infty$. Using $r=q$ and $s=2$ in \eqref{eq:norm-ineq}, we get  
\[
1\leq \frac{\norm{\bs{w}}_q}{\norm{\bs{w}}_2}\leq d^{1/q-1/2}.
\]
Thus the existence of a $\ell_p$-adversarial-robust classifier requires SNR  of at least
$$
\frac{\norm{\bs{\mu}}_2}{\sigma} = O(1 \cdot d^{1/p}) = O(d^{1/p}).
$$

\noindent {\bf (C)} When $p=\infty$, then $q=1$. Using $r=q$ and $s=2$ in \eqref{eq:norm-ineq}, we get
\[
1\leq \frac{\norm{\bs{w}}_1}{\norm{\bs{w}}_2}\leq d^{1/q-1/2}.
\]
Thus the existence of a $\ell_p$-adversarial-robust classifier requires SNR  of at least
$$
\frac{\norm{\bs{\mu}}_2}{\sigma} = O(1 \cdot \sqrt{\log d}) = O(\sqrt{\log d}).
$$

\noindent We summarize the results for cases (A), (B) and (C)  into the following theorem. 
\begin{thm}\label{thm:adv-inev}
For linear classification of balanced Gaussian mixture data, the existence of a $\ell_p$-adversarial-robust classifier requires SNR  of at least
\begin{align}
\frac{\norm{\bs{\mu}}_2}{\sigma}=\begin{cases}
O(d^{\min(1/p, 1/2)})\qquad& p\in[1, \infty),\\
O(\sqrt{\log d})\qquad &p=\infty.
\end{cases}
\end{align}
\end{thm}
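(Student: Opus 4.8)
The plan is to read the theorem directly off the $\ell_p$-adversarial-error lower bound already in hand. By \eqref{eq:lp-adv-err}, every linear classifier satisfies $p_{err|p}\ge 1-\Phi\big(\tfrac{\norm{\bs\mu}_2}{\sigma}\cos\theta-\tfrac{\norm{\bs w}_q}{\norm{\bs w}_2}\tfrac{\varepsilon}{\sigma}\big)$. For the classifier to be simultaneously useful and $\ell_p$-adversarial-robust, $p_{err|p}$ must be small, hence the argument of $\Phi$ must be bounded below by a positive constant; throwing away the factor $\cos\theta\le 1$ gives the necessary condition \eqref{eq:cond}, $\tfrac{\norm{\bs\mu}_2}{\sigma}\gtrsim\tfrac{\norm{\bs w}_q}{\norm{\bs w}_2}\tfrac{\varepsilon}{\sigma}$. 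The whole proof then amounts to lower-bounding the right-hand side by quantities that do not depend on the particular weight vector $\bs w$ used by the (hypothetical) robust classifier.

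First I would pin down the order of $\varepsilon$ that a meaningful defense must handle. By Lemma~\ref{lem:lp-noise} the typical $\ell_p$ size of the $N(\bs 0,\sigma^2\bs I_d)$ noise is $m_p^{1/p}d^{1/p}\sigma$, which is of order $d^{1/p}\sigma$ for fixed $p\in[1,\infty)$ and of order $\sqrt{\log d}\,\sigma$ for $p=\infty$; a perturbation below this level is indistinguishable from noise, so the relevant regime is $\varepsilon$ of order $d^{1/p}\sigma$ (respectively $\sqrt{\log d}\,\sigma$), exactly as in the $\ell_2$ analysis of the main text. Substituting into the necessary condition yields $\tfrac{\norm{\bs\mu}_2}{\sigma}\gtrsim\tfrac{\norm{\bs w}_q}{\norm{\bs w}_2}\,d^{1/p}$ for finite $p$, and $\tfrac{\norm{\bs\mu}_2}{\sigma}\gtrsim\tfrac{\norm{\bs w}_q}{\norm{\bs w}_2}\,\sqrt{\log d}$ for $p=\infty$.

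Next I would bound $\norm{\bs w}_q/\norm{\bs w}_2$ from below uniformly in $\bs w$ using the norm-comparison inequality \eqref{eq:norm-ineq}, splitting on the position of $q$ relative to $2$. Case (A): when $1\le p<2$, so $2<q\le\infty$, taking $r=2,\ s=q$ gives $\norm{\bs w}_q/\norm{\bs w}_2\ge d^{1/q-1/2}$. Case (B): when $2<p<\infty$, so $1<q<2$, taking $r=q,\ s=2$ gives $\norm{\bs w}_q/\norm{\bs w}_2\ge 1$. Case (C): when $p=\infty$, so $q=1$, likewise $\norm{\bs w}_1/\norm{\bs w}_2\ge 1$; the borderline $p=2$ is the main-text case with ratio $1$. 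Plugging these into the previous display and using $1/p+1/q=1$ gives, in case (A), $\tfrac{\norm{\bs\mu}_2}{\sigma}\gtrsim d^{1/q-1/2+1/p}=d^{1/2}$; in case (B), $\tfrac{\norm{\bs\mu}_2}{\sigma}\gtrsim d^{1/p}$; and in case (C), $\tfrac{\norm{\bs\mu}_2}{\sigma}\gtrsim\sqrt{\log d}$. Since $\tfrac12=\min(1/p,1/2)$ when $p\le 2$ and $1/p=\min(1/p,1/2)$ when $p\ge 2$, cases (A), (B) and $p=2$ collapse into the asserted $O(d^{\min(1/p,1/2)})$ bound, and (C) gives the $\sqrt{\log d}$ branch.

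I do not expect a genuine technical obstacle here — the computation is bookkeeping once \eqref{eq:lp-adv-err}, Lemma~\ref{lem:lp-noise} and \eqref{eq:norm-ineq} are available. The one point that needs care is the direction of the estimates: I must use a lower bound on $\norm{\bs w}_q/\norm{\bs w}_2$ valid for \emph{every} $\bs w$ (the infimum over $\bs w$, attained by spread-out or $1$-sparse weight vectors), since the robust classifier may pick any weights; this is precisely what turns "the necessary condition holds for this $\bs w$" into an unconditional lower bound on the SNR. The only modelling input, as opposed to pure computation, is the decision to measure $\varepsilon$ against the typical noise level from Lemma~\ref{lem:lp-noise}, which is what fixes the exponent $1/p$ (or the $\sqrt{\log d}$) that finally appears.
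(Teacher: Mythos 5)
Your proposal is correct and follows essentially the same route as the paper's own argument: the lower bound \eqref{eq:lp-adv-err} giving the necessary condition \eqref{eq:cond}, the calibration of $\varepsilon$ against the typical noise level from Lemma~\ref{lem:lp-noise}, and the case split on $q$ versus $2$ using the norm-comparison inequality \eqref{eq:norm-ineq} with the uniform lower bound on $\norm{\bs{w}}_q/\norm{\bs{w}}_2$. Your explicit remarks about discarding $\cos\theta\le 1$ and about needing the infimum of the norm ratio over all admissible $\bs{w}$ make precise two steps the paper leaves implicit, but the substance is identical.
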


\noindent Theorem ~\ref{thm:adv-inev} shows that the required SNR magnitude for $\ell_p$-adversarial-robustness differs for different $p$. The $\ell_p$-adversarial-robustness is hardest to achieve for $1 \le p \le 2$ since the required SNR $O(\sqrt{d})$ is highest in these cases. The $\ell_\infty$-adversarial-robustness has the smallest SNR requirement, thus easiest to achieve. This agrees with the observation by \citet{AdvRobustMNIST}: the $\ell_\infty$ robust classifier in \citet{DLResistance} is still highly susceptible to $\ell_2$ attack.

\subsection{$\ell_p$-Strong-Adversarial Rate and Existence of $\ell_p$-Strong-Adversarial-Robust Classifiers}\label{sec:lp-s-adv}

Following the same derivations before, we have the following lemma for the defining set $\Omega_{\varepsilon,\delta |p}=\{\bs{x}: \bs{x}\ \mbox{has\ an\ }(\varepsilon,\delta)-\ell_p\mbox{-strong-adversarial\ example}\}$.

\begin{lem}\label{lem:lp-s-adv-set}
The defining set for $(\varepsilon,\delta)-\ell_p$-strong-adversarial examples is given by:
\begin{equation}\label{eq:lp-s-adv-set}
\Omega_{\varepsilon,\delta |p}=\Omega(\bs{u}_p)\cup\Omega(-\bs{u}_p)
\end{equation}
where $\bs{u}_p$ is the solution to the optimization problem:
\begin{equation}\label{opt-lp-s-adv}
\max |\bs{w}\cdot\bs{v}|, \qquad \bs{v}\in D_{\varepsilon,\delta|p}=\{ \bs{v}\in\mathbb{R}^d:\norm{\bs{v}}_p\leq\varepsilon, |\bs{v}\cdot\bs{\mu}_0|\leq\delta\}\subset D_{\varepsilon|p}.
\end{equation}
\end{lem}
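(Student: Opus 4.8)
The plan is to follow the template of the proofs of Lemma~\ref{lem:set} and Lemma~\ref{lem:lp-adv-set}, treating $\bs{u}_p$ abstractly as a maximizer of \eqref{opt-lp-s-adv}. First I would record two elementary structural facts about the constraint set $D_{\varepsilon,\delta|p}$: it is compact (closed and bounded), and it is symmetric, meaning $\bs{v}\in D_{\varepsilon,\delta|p}$ if and only if $-\bs{v}\in D_{\varepsilon,\delta|p}$, since both $\norm{\bs{v}}_p$ and $|\bs{v}\cdot\bs{\mu}_0|$ are unchanged under $\bs{v}\mapsto-\bs{v}$. Compactness together with continuity of $\bs{v}\mapsto|\bs{w}\cdot\bs{v}|$ guarantees that a maximizer $\bs{u}_p$ exists; symmetry shows $\max_{\bs{v}\in D_{\varepsilon,\delta|p}}|\bs{w}\cdot\bs{v}|=\max_{\bs{v}\in D_{\varepsilon,\delta|p}}\bs{w}\cdot\bs{v}$, so we may take $\bs{u}_p$ with $\bs{w}\cdot\bs{u}_p\ge 0$, and then $-\bs{u}_p$ minimizes $\bs{w}\cdot\bs{v}$ over the same set. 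The sign choice for $\bs{u}_p$ is immaterial, since \eqref{eq:lp-s-adv-set} is symmetric in $\bs{u}_p\leftrightarrow-\bs{u}_p$.

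Next, the inclusion $\Omega_{\varepsilon,\delta|p}\supseteq\Omega(\bs{u}_p)\cup\Omega(-\bs{u}_p)$ is immediate from $\Omega_{\varepsilon,\delta|p}=\bigcup_{\bs{v}\in D_{\varepsilon,\delta|p}}\Omega(\bs{v})$ and $\pm\bs{u}_p\in D_{\varepsilon,\delta|p}$. For the reverse inclusion I would take $\bs{x}\in\Omega_{\varepsilon,\delta|p}$, so there is $\bs{v}\in D_{\varepsilon,\delta|p}$ with $C(\bs{x}+\bs{v})\neq C(\bs{x})$. If $\bs{x}$ is classified into `$-$', then $\bs{w}\cdot\bs{x}+b<0$ while $\bs{w}\cdot(\bs{x}+\bs{v})+b>0$, forcing $\bs{w}\cdot\bs{v}>-(\bs{w}\cdot\bs{x}+b)>0$; by optimality $\bs{w}\cdot\bs{u}_p\ge\bs{w}\cdot\bs{v}$, hence $\bs{w}\cdot(\bs{x}+\bs{u}_p)+b\ge\bs{w}\cdot\bs{x}+b+\bs{w}\cdot\bs{v}>0$ and $\bs{x}\in\Omega(\bs{u}_p)$. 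If instead $\bs{x}$ is classified into `$+$', the symmetric argument using that $-\bs{u}_p$ minimizes $\bs{w}\cdot\bs{v}$ over $D_{\varepsilon,\delta|p}$ yields $\bs{x}\in\Omega(-\bs{u}_p)$. Combining the two cases gives $\Omega_{\varepsilon,\delta|p}\subseteq\Omega(\bs{u}_p)\cup\Omega(-\bs{u}_p)$, which together with the trivial inclusion proves \eqref{eq:lp-s-adv-set}.

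I do not expect a real obstacle in the set equality itself: it uses only linearity of the classifier together with the compactness and symmetry of the perturbation region, exactly as in the earlier lemmas, with no H\"older or Cauchy--Schwarz input entering at this stage. The genuinely substantive step, which this statement deliberately postpones, is to describe the maximizer $\bs{u}_p$ of \eqref{opt-lp-s-adv} explicitly enough to obtain a closed-form $\ell_p$-strong-adversarial rate. Here the difficulty is that, unlike the $\ell_2$ case, the $\ell_p$ ball does not decouple under the orthogonal splitting into the signal direction $\bs{\mu}_0$ and its complement, so the clean reduction to a two-parameter ``corner or tangent point'' problem from the proof of Lemma~\ref{lem:set} is unavailable; instead one must analyze the full $d$-dimensional program, for instance via its KKT conditions, separating the regimes where the constraint $|\bs{v}\cdot\bs{\mu}_0|\le\delta$ is active or slack (recovering $\bs{v}_{0|p}$ of Lemma~\ref{lem:lp-adv-set} in the slack case). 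That case analysis is where the remaining work in this subsection will concentrate before the $\ell_p$-strong-adversarial rate can be written explicitly.
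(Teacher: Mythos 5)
Your proof is correct and follows essentially the same route as the paper, which itself only remarks that the solution $\bs{u}_p$ exists because the feasible region is convex and compact and otherwise defers to ``the same derivations before'' (i.e., the two-sided optimality argument you spell out from Lemma~\ref{lem:set}). Your closing observation is also consistent with the paper: no closed form for $\bs{u}_p$ is ever derived in general --- Theorem~\ref{thm:lp-s-adv-rate} keeps $\bs{w}\cdot\bs{u}_p$ abstract and it is only evaluated explicitly for the Bayes classifier, where the signal constraint is the binding one and $\bs{w}\cdot\bs{u}_p=\delta$.
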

\noindent Notice the optimization problem of $\max |\bs{w}\cdot\bs{v}|$ is a linear programming problem, and the feasible region  $D_{\varepsilon,\delta|p}$ is a convex region. Therefore the solution $\bs{u}_p$ does exist.

\noindent Now replace the term $\varepsilon\norm{\bs{w}}_2$ by $\bs{w}\cdot\bs{u}_{p}$ in the previous derivations of $(\varepsilon,\delta)$-strong-adversarial rate using $\ell_2$ norm, we get the following Theorem.
\begin{thm}\label{thm:lp-s-adv-rate}
The overall $(\varepsilon,\delta)-\ell_p$-strong-adversarial rate of a linear classifier for the balanced Gaussian mixture data is
\begin{equation}\label{eq:lp-s-adv-rate}
p_{s-adv|p} =1-p_m-0.5\bigg[\Phi\bigg(\frac{\bs{w}\cdot\bs{\mu}+b'}{\norm{\bs{w}}_2\sigma}-\frac{\bs{w}\cdot\bs{u}_p}{\norm{\bs{w}}_2\sigma}\bigg)+\Phi\bigg(\frac{\bs{w}\cdot\bs{\mu}-b'}{\norm{\bs{w}}_2\sigma}-\frac{\bs{w}\cdot\bs{u}_p}{\norm{\bs{w}}_2\sigma}\bigg)\bigg]
\end{equation}
\end{thm}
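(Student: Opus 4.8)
The plan is to mimic exactly the structure used to prove Theorem~\ref{thm:adv-rate} and Theorem~\ref{thm:s-adv-rate}, since the whole apparatus is already in place. The only thing that changes when moving from the $\ell_2$ setting to the $\ell_p$ setting is which fixed perturbation vector one plugs in: Lemma~\ref{lem:lp-s-adv-set} tells us that $\Omega_{\varepsilon,\delta|p}=\Omega(\bs{u}_p)\cup\Omega(-\bs{u}_p)$, where $\bs{u}_p$ solves the linear program in \eqref{opt-lp-s-adv}. So the argument reduces to the one computational fact: for a random data vector $\bs{x}$ from the `$+$' class, it has an $(\varepsilon,\delta)$-$\ell_p$-strong-adversarial example if and only if it is correctly classified and $\bs{x}\in\Omega(-\bs{u}_p)$, which (by the same reasoning used for $\Omega(-\varepsilon\bs{v}_0)$ in the main text, relying on $\bs{u}_p$ being the $|\bs{w}\cdot\bs{v}|$-maximizer over the feasible set) translates to the event $0<\bs{w}\cdot\bs{x}+b<\bs{w}\cdot\bs{u}_p$.

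First I would record that, under $\varphi_+(\bs{x})$, the scalar $\bs{w}\cdot\bs{x}+b$ is univariate Gaussian with mean $\bs{w}\cdot\bs{\mu}_++b$ and variance $\norm{\bs{w}}_2^2\sigma^2$ — this is the same distributional fact used to get \eqref{eq:adv+}, and it does not depend on which norm bounds the perturbation. Hence the `$+$' class strong-adversarial rate is $0.5\,pr[0<\bs{w}\cdot\bs{x}+b<\bs{w}\cdot\bs{u}_p\mid\varphi_+(\bs{x})]$, which evaluates to $0.5[\Phi((\bs{w}\cdot\bs{u}_p-(\bs{w}\cdot\bs{\mu}_++b))/(\norm{\bs{w}}_2\sigma))-\Phi(-(\bs{w}\cdot\bs{\mu}_++b)/(\norm{\bs{w}}_2\sigma))]$. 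Symmetrically, the `$-$' class contributes the analogous expression with $\bs{\mu}_-$ and a sign flip, exactly paralleling \eqref{eq:adv-}. Then I would substitute $\bs{w}\cdot\bs{\mu}_\pm+b=\pm\bs{w}\cdot\bs{\mu}+b'$ with $b'=\bs{w}\cdot\bar{\bs{\mu}}+b$, add the two contributions, and rearrange using $p_m$ from \eqref{eq:app_pm} in precisely the way \eqref{eq:adv-rate'} was folded into Theorem~\ref{thm:adv-rate}. The net effect is that the term $\varepsilon\norm{\bs{w}}_2$ appearing in the $\ell_2$ adversarial rate gets replaced throughout by $\bs{w}\cdot\bs{u}_p$, yielding \eqref{eq:lp-s-adv-rate}.

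The honest statement of the main obstacle is that there is essentially no obstacle: every ingredient — the defining-set characterization, the Gaussian-linear-functional computation, the $b'$ bookkeeping, the combination with $p_m$ — has already been established either in the main text or in Lemma~\ref{lem:lp-s-adv-set}. The one place where a reader might want a sentence of justification is the ``only check $\bs{u}_p$ and $-\bs{u}_p$'' step: one must verify that if \emph{some} feasible $\bs{v}\in D_{\varepsilon,\delta|p}$ flips the classification of a `$+$'-labelled $\bs{x}$, then $-\bs{u}_p$ already does, which follows because $\bs{w}\cdot(-\bs{u}_p)=-\max_{\bs{v}\in D_{\varepsilon,\delta|p}}|\bs{w}\cdot\bs{v}|\le\bs{w}\cdot\bs{v}$ for every feasible $\bs{v}$ — the same Cauchy--Schwarz/Hölder-style extremality argument carried out in the proof of Lemma~\ref{lem:set}, now invoked through Lemma~\ref{lem:lp-s-adv-set} rather than reproven. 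So I would keep this proof to a paragraph: cite Lemma~\ref{lem:lp-s-adv-set} for $\Omega_{\varepsilon,\delta|p}=\Omega(\bs{u}_p)\cup\Omega(-\bs{u}_p)$, note that the derivation of \eqref{eq:adv+}--\eqref{eq:adv-rate'} goes through verbatim with $\varepsilon\norm{\bs{w}}_2$ replaced by $\bs{w}\cdot\bs{u}_p$, and conclude \eqref{eq:lp-s-adv-rate}.
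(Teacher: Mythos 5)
Your proposal is correct and follows essentially the same route as the paper, which proves Theorem~\ref{thm:lp-s-adv-rate} by invoking Lemma~\ref{lem:lp-s-adv-set} and then substituting $\bs{w}\cdot\bs{u}_p$ for $\varepsilon\norm{\bs{w}}_2$ in the $\ell_2$ derivation of equations \eqref{eq:adv+}--\eqref{eq:adv-rate'}. Your extra remark justifying why checking $\pm\bs{u}_p$ suffices (via the extremality of $\bs{u}_p$ over the symmetric feasible set) is exactly the argument the paper delegates to the proofs of Lemmas~\ref{lem:set} and \ref{lem:lp-s-adv-set}.
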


\noindent We now try to find an SNR order that allows $\ell_p$-strong-adversarial-robustness by applying formula~\eqref{eq:lp-s-adv-rate} to the Bayes classifier whose $\bs{w}=\bs{\mu}_0$ and $b'=0$. In this case, the solution to the optimization problem~\eqref{opt-lp-s-adv} becomes $\bs{u}_p=\delta\bs{\mu}_0$. Thus using formula~\eqref{eq:lp-s-adv-rate} we can get the $(\varepsilon,\delta)-\ell_p$-strong-adversarial-error rate for the Bayes classifier as
\begin{equation}\label{eq:Bayes-s-err-t}
p_{s-err|p} = 1-\Phi\bigg(\frac{\norm{\bs{\mu}}_2-\delta}{\sigma}\bigg).
\end{equation}
Since practical relevant $\delta$ can not exceed $\sigma$ (in that case, no classifier can work as at least half of all data vectors will be perturbed into another class),  SNR $\frac{\norm{\bs{\mu}}_2}{\sigma}$ of order $O(1)$ can result in a useful $\ell_p$-strong-adversarial-robust classifier. This agrees with the conclusions in the main text about the existence of $\ell_2$-strong-adversarial-robust classifiers.

\end{document}